\documentclass[letterpaper,10pt,journal,twoside]{IEEEtran}
\usepackage{amsmath,amsfonts}
\usepackage{algorithm}
\usepackage{array}
\usepackage[caption=false,font=normalsize,labelfont=sf,textfont=sf]{subfig}
\usepackage{textcomp}
\usepackage{stfloats}
\usepackage{url}
\usepackage{hyperref}
\usepackage{verbatim}
\usepackage{graphicx}
\usepackage{balance}
\usepackage{algpseudocode}
\usepackage{subcaption}
\usepackage{tikz}
\usepackage{booktabs}
\usepackage[svgnames]{xcolor}
\usepackage{makecell}
\usepackage{wrapfig}
\usepackage{soul}
\soulregister{\eqref}{7}
\usepackage[font={footnotesize}]{caption}
\usepackage{cite}

\newcommand{\figref}[1]{{Fig.~\ref{#1}}}
\newcommand{\tabref}[1]{Table~\ref{#1}}
\newcommand{\secref}[1]{Section~\ref{#1}}
\newcommand{\pseref}[1]{Algorithm~\ref{#1}}
\newcommand{\theoref}[1]{Theorem~\ref{#1}}
\newcommand{\bd}[1]{\boldsymbol{#1}}

\usepackage{amsthm}
\makeatletter
\newtheoremstyle{ieeebreak}%
  {6pt}
  {6pt}
  {\itshape}
  {}
  {\bfseries}
  {}
  {0pt}
  {%
    \thmname{#1}\thmnumber{ #2}\thmnote{ {\normalfont\textit{(#3)}}}\newline
  }%
\makeatother
\theoremstyle{ieeebreak}
\newtheorem{theorem}{Theorem}[section]
\newtheorem{lemma}[theorem]{Lemma}

\begin{document}
\bstctlcite{IEEEexample:BSTcontrol}

\title{Where-to-Learn: Analytical Policy Gradient Directed Exploration \\ for On-Policy Robotic Reinforcement Learning}

\author{
    Leixin Chang, 
    Xinchen Yao,
    Ben Liu,
    Liangjing Yang,
    Hua Chen
}


\maketitle

\begin{abstract}
On-policy reinforcement learning (RL) algorithms have demonstrated great potential in robotic control, where effective exploration is crucial for efficient and high-quality policy learning. However, how to encourage the agent to explore the better trajectories efficiently remains a challenge. 
Most existing methods incentivize exploration by maximizing the policy entropy or encouraging novel state visiting regardless of the potential state value. We propose a new form of directed exploration that uses analytical policy gradients from a differentiable dynamics model to inject task-aware, physics-guided guidance, thereby steering the agent towards high-reward regions for accelerated and more effective policy learning.
We integrate our exploration approach into a widely used on-policy RL algorithm, Proximal Policy Optimization, to test and demonstrate its effectiveness. We conduct extensive benchmark experiments and demonstrate the effectiveness of the proposed exploration augmentation method. We further
test our approach on a 6-DOF point-foot robot for velocity tracking locomotion, and conduct the simulation test
and implement a successful sim-to-real deployment as the ultimate
validation. 
Project Page: \texttt{\url{wheretolearn.github.io}}.
\end{abstract}

\begin{IEEEkeywords}
Legged robot, reinforcement learning, model learning for control.
\end{IEEEkeywords}

\section{Introduction} \label{sec:introduction}
\IEEEPARstart{O}{n-policy} Reinforcement Learning (RL) has demonstrated great potential in various robotic control problems in recent years. \cite{hwangbo2019learning, kober2013reinforcement, rudin2022learning, peng2018deepmimic}. RL holds the promise of synthesizing a complex robotic controller by leveraging experiential data from robot-environment interactions, guided by objectives defined as task rewards. 
Policy updates heavily depend on the quality of collected trajectories during exploration, where high-return samples provide positive gradients that reinforce desirable behaviors, while low-return samples suppress undesirable ones. Consequently, the effectiveness of policy improvement relies on whether exploration produces trajectories that are sufficiently informative, highlighting the importance of exploration.
Current on-policy model-free RL algorithms, such as Proximal Policy Optimization (PPO) \cite{schulman2017proximal}, are quite data-expensive to train for the high-dimensional control problem. 
One of the reasons for the low sample efficiency is that the exploration for data basically relies on the undirected, random perturbations derived from the policy's inherent stochasticity \cite{williams1992simple}, entropy maximization \cite{haarnoja2018soft} and noise injection \cite{plappert2017parameter}.
This Brownian-motion style of exploration is agnostic to the environment's underlying physics, making the discovery of high-reward regions a slow and inefficient process. 
Model-based Reinforcement Learning improves sample efficiency by generating synthetic trajectories from a learned dynamics model \cite{kurutach2018model,yu2021combo,feinberg2018model}, but suffers from compounding errors and distributional mismatch \cite{deisenroth2011pilco}.

In this work, we try to implement more sample-efficient and stable RL training by injecting dynamics priors into the exploration process, making the random exploration more directed. 
Specifically, rather than directly using the dynamics model to generate the trajectory data, we make use of the analytical policy gradient propagated from a short-term return objective through the differentiable dynamics model, producing an exploratory policy with short-horizon foresight. The exploratory policy is rolled out to generate more informative and directional trajectory data that serve as the augmentation of the data collected by the primary policy in a standard on-policy algorithm like PPO. Finally, the policy is trained on the augmented exploration data with the PPO stable training mechanism. 
\begin{figure}[t]
    \centering
    \includegraphics[width=0.9\linewidth]{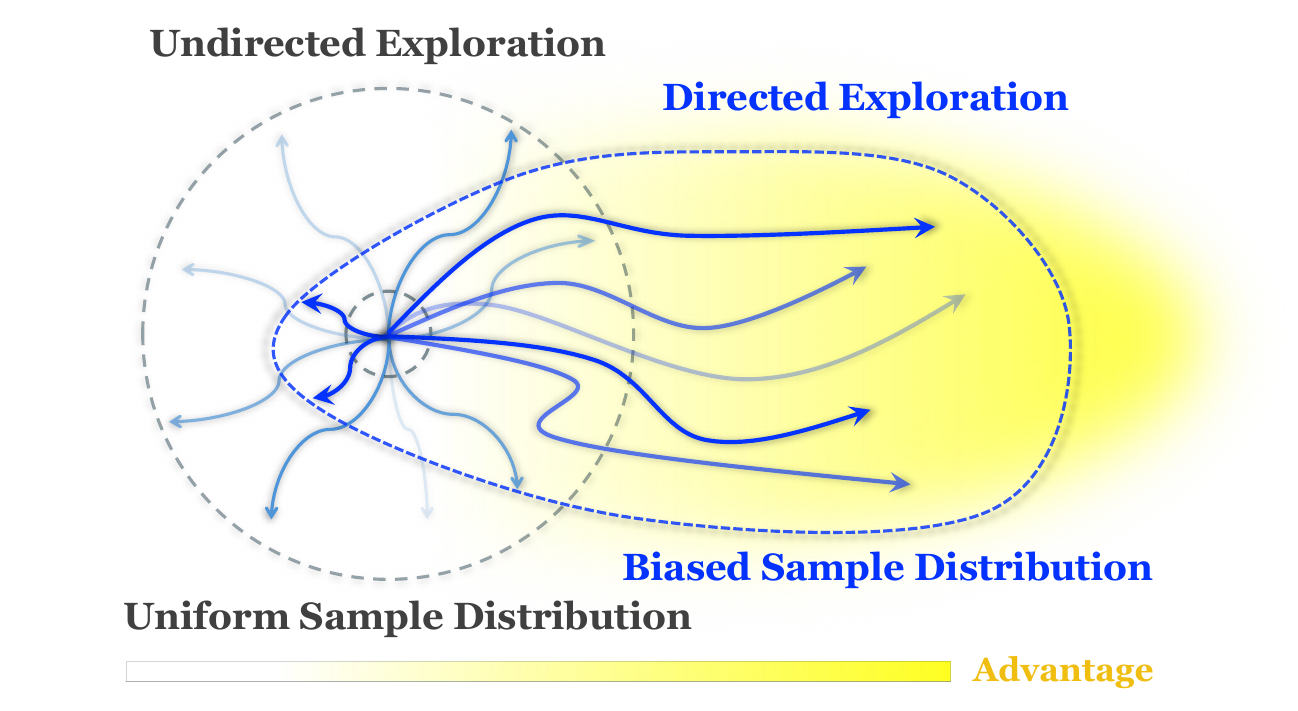}
    \caption{Illustration of the proposed directed exploration.}
    \label{fig:idea_illustration}
    \vspace{-10pt}
\end{figure}

The main contribution of this work is a novel exploration augmentation method for on-policy RL, which leverages the differentiability of dynamics as a kind of inductive bias or dynamics priors to guide exploration and thus enhance the sample efficiency of the learning process.
Further, we theoretically justify our method's mechanism under ideal premises. Specifically, we prove that exploratory policy constitutes a policy improvement and that the exploratory data yield a consistently positive learning signal. 
Finally, comprehensive empirical experiments, including benchmark tests and sim-to-real experiments on a biped robot, are conducted to demonstrate the effectiveness of our method in exploration augmentation and improving sample efficiency over the PPO algorithm.  

\section{Related Work}
\subsection{Exploration in Reinforcement Learning} \label{sec:rl_explore}
In on-policy RL, policy improvement relies on generating sufficiently informative trajectories. Existing methods enhance exploration by rewarding policy stochasticity, e.g., Maximum Entropy \cite{ziebart2008maximum, haarnoja2018soft}, 
or employing task-agnostic intrinsic rewards based on novelty \cite{burdaexploration, pathak2017curiosity, houthooft2016vime}. 
Fundamentally differing from the prior work, our approach derives exploration signals directly from the task objective and system dynamics.
Leveraging differentiable simulation, we inject dynamics priors to provide physics-informed guidance, rendering exploration efficient and task-oriented.
This distinguishes our method by prioritizing exploration \textit{direction} over \textit{breadth}, providing a purposeful \textit{thrust} toward high-reward regions and rendering the learning process significantly more efficient and task-oriented.

\subsection{Policy Learning via Differentiable Dynamics} 
Differentiable dynamics allow backpropagating analytical gradients from the objective directly to policy parameters \cite{freeman1brax, mozer2013focused, metz2021gradients}, enabling the Analytical Policy Gradient (APG) method.
Previous works applied the APG to quadruped locomotion control \cite{songlearning,luo2024residual,schwarke2024learning} or vision-based aerial robot control \cite{zhang2025learning}.
Other works integrate APG into actor-critic frameworks \cite{xuaccelerated,clavera2020model}, using value functions to approximate terminal values for stable training.
While APG offers efficient, low-variance gradients \cite{song2024learning, suh2022differentiable}, it suffers from numerical instability due to contact discontinuities and gradient explosion or vanishing over long horizons \cite{song2024learning,suh2022differentiable}.
Rather than directly optimizing parameters, we use APG to guide exploration. This yields informative trajectories while circumventing the stability issues of direct analytical optimization.

\subsection{Sample-Efficient Reinforcement Learning} 
Sample efficiency is a critical bottleneck in robotic RL due to high physical interaction costs. Existing approaches enhance efficiency via off-policy data reuse or model-based priors. Off-policy algorithms like SAC\mbox{\cite{haarnoja2018soft}} and TD3 \mbox{\cite{fujimoto2018addressing}} utilize replay buffers and double-Q learning to maximize data utility and ensure stability in continuous action spaces. AWR \mbox{\cite{peng2019advantage}} further improves stability by framing policy updates as a weighted supervised learning problem, making it highly effective for learning from demonstrations. Concurrently, model-based RL (MBRL) leverages learned dynamics to generate synthetic trajectories, expand value estimation horizons \mbox{\cite{sutton1991dyna, kuvayev1996model,janner2019trust,kurutach2018model,yu2021combo,feinberg2018model}}, or perform MPC-style planning \mbox{\cite{piche2018probabilistic,chua2018deep}}. However, traditional MBRL often treats dynamics as a black box. In contrast, we leverage analytical gradients from differentiable dynamics to provide physics-informed exploration guidance. This enables directed discovery of high-reward regions, combining on-policy stability with the rapid search capabilities of first-order dynamics priors.

\section{Preliminary} \label{sec:preliminary}
\subsection{Reinforcement Learning for Robotic Control}
Through the lens of reinforcement learning, the robotic control problem is commonly modeled as a Markov Decision Process (MDP), defined by a tuple $\left(\mathcal{S}, \mathcal{A}, \mathcal{T}, \mathcal{R}, \mathcal{\gamma}\right)$, where $\mathcal{S}$ denotes the state space, $\mathcal{A}$ represents the action space, $\mathcal{T}:\mathcal{S} \times \mathcal{A} \rightarrow \mathcal{S}$ is the transition kernel representing the environment dynamics $p(\bd{s}_{t+1}|\bd{s}_t,\bd{a}_t)$, $\mathcal{R}:\mathcal{S} \times \mathcal{A} \times \mathcal{S} \rightarrow \mathbb{R}$ denotes the reward function that assigns a scalar reward to each state transition, quantifying the immediate benefit of taking action $\bd{a}_t$ in state $\bd{s}_t$ under transition dynamics described by $\mathcal{T}$. The agent learns a policy $\pi_{\bd{\theta}}$ to maximizes the discounted cumulative reward $\mathbb{E}_{\tau \sim \pi_{\bd{\theta}}}\left[ \sum_{t}\gamma^t r_t\right]$, where $\tau$ denotes a sampled trajectory under policy $\pi_{\bd{\theta}}$, $r_t$ represents the immediate reward at time $t$ and $\gamma \in [0,1)$ is the discounted factor. 
As mentioned in \secref{sec:rl_explore}, exploration plays a crucial role in effective policy learning. 
Typically, on-policy RL exploration mainly comes from policy stochasticity and entropy maximization.
Our method augments the exploration by leveraging physics priors contained in the environment dynamics, aiming to achieve more effective and task-oriented exploration.
   
\subsection{Policy Learning with Analytical Policy Gradient}
Conceptually, the environment dynamics can be treated as an abstract function $\bd{s}_{t+1}=\bd{f}(\bd{s}_t,\bd{a}_t)$ representing the mapping $\mathcal{F}:\mathcal{S}\times\mathcal{A}\rightarrow \mathcal{S}$, where $\bd{s}_t$ is the system state and $\bd{a}_t$ is the control input at time step $t$. Notably, here the dynamics mapping $\mathcal{F}$ is deterministic and can be regarded as a degenerate form of the stochastic transition kernel $\mathcal{T}$ in the MDP formulation of RL.
And the environment receives a reward signal $r(\bd{s}_t,\bd{a}_t)$ at time step $t$. 
The control policy can be represented by a neural network $\pi_{\bd{\theta}}(\bd{s})$ that takes $\bd{s}_t$ as input and outputs action $\bd{a}_t$. 
If $\bd{f}$ is differentiable w.r.t. $\bd{s}_t$ and $\bd{a}_t$, the system has differentiable dynamics, enabling gradient propagation for policy learning, as explored in APG \cite{freeman1brax, clavera2020model} and First-order Gradient (FoG) \cite{qiao2021efficient}.
\begin{equation}
    \mathcal{L}_{\bd{\theta}}^{\text{APG}} = -\sum^{h-1}_{t=0} r(\bd{s}_t,\bd{a}_t) =-\sum^{h-1}_{t=0} r(\bd{s}_t,\pi_{\bd{\theta}} (\bd{s}_t)),
\end{equation}
where $h$ refers to the trajectory horizon length.
Following \textit{Backpropagation-Through-Time} (BPTT) technique \cite{metz2021gradients}, the gradient of $\mathcal{L}_{\bd{\theta}}^{\text{APG}}$ in terms of policy parameters $\bd{\theta}$ can be expressed as \eqref{eq:bptt},
\begin{equation} \label{eq:bptt}
    \nabla_{\bd{\theta}}\mathcal{L}_{\bd{\theta}}^{\text{APG}}\!=\!-\frac{1}{h}\sum_{t=0}^{h}\left[\frac{\partial r_t}{\partial \bd{a}_t}\frac{\mathrm{d}\bd{a}_t}{\mathrm{d}{\bd{\theta}}}\!+\!\sum_{k=1}^t\!\frac{\partial r_t}{\partial \bd{s}_t}\!\left(\prod_{i=k}^t\frac{\partial \bd{s}_i}{\partial \bd{s}_{i-1}}\!\right)\!\frac{\partial \bd{s}_k}{\partial{\bd{\theta}}}\right],
\end{equation}
where the matrix of partial derivatives $\frac{\partial \bd{s}_i}{\partial \bd{s}_{i-1}}$ is the Jacobian of differentiable dynamics $\bd{f}$. 
Thus, assuming differentiable dynamics and rewards, the policy gradient is computed analytically via BPTT as shown in \eqref{eq:bptt}. 

However, the APG method is challenged by the presence of the noisy optimization landscape, gradient exploding and vanishing and 
empirical bias of gradient computing in the contact-rich environment, which results in unstable training \cite{xuaccelerated, suh2022differentiable, zhong2022differentiable}. 
In our method, APG is repurposed from a policy optimizer into an exploration-guidance mechanism providing informative trajectories for model-free policy optimization. 

\section{Methodology} \label{sec:method}
We aim to augment the undirected exploration of on-policy RL to be a directed and task-oriented exploration by changing the data distribution used for policy network updates.
By utilizing directional gradients from environment dynamics to guide exploration toward high-reward regions, our method accelerates policy learning while circumventing the noisy landscapes and gradient instability inherent to APG. 

\subsection{Analytical Policy Gradient Augmented Exploration}
In each iteration, the exploratory policy is initialized from primary policy and undergoes APG update. 
We then collect trajectories using both policies in parallel environments and merge them into an augmented dataset.
For the update of the policy network and the critic network, we update the critic network with only the data collected by the primary policy and use the aforementioned augmented trajectory dataset to update the policy network under the PPO update rule. The complete pipeline is illustrated in \figref{fig:method_overview}. 
\begin{figure}[htbp]
    \centering
    \includegraphics[width=\linewidth]{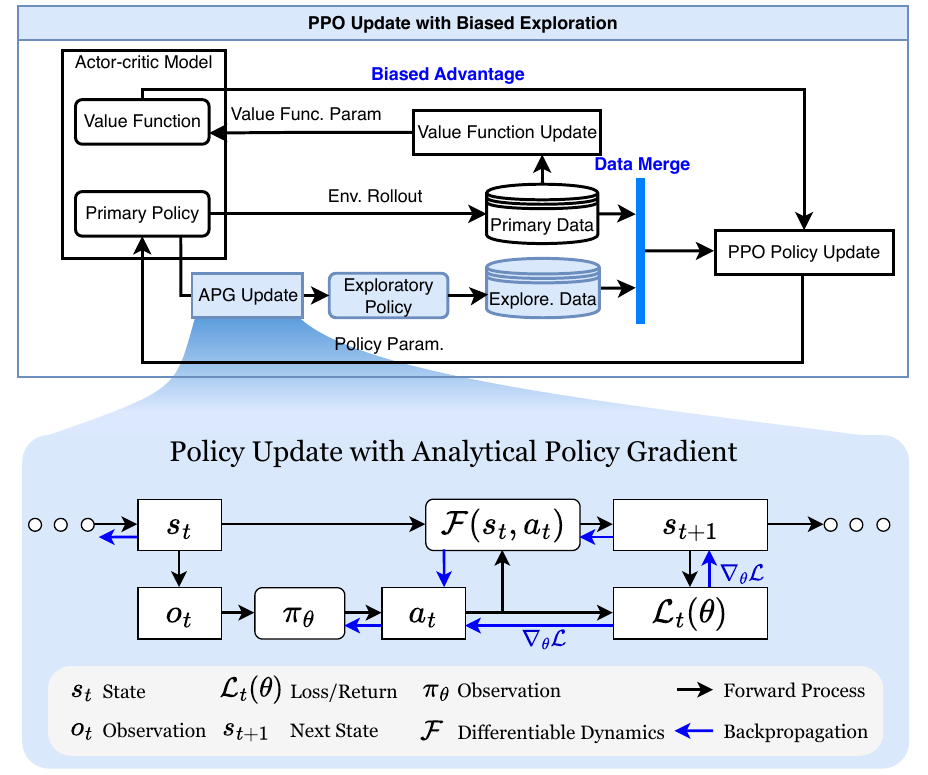}
    \caption{Method Overview.}
    \label{fig:method_overview}
    \vspace{-10pt}
\end{figure}
\subsubsection{Policy Update}
We use the APG method SHAC \cite{xuaccelerated}\footnote[1]{We refer to SHAC as APG in the following to emphasize our contribution lies in the general paradigm of exploration augmentation.} to update the current primary policy to obtain a temporary exploratory policy. 
The exploratory policy collects trajectories to augment the original PPO data. 
To make the experience data more informative in terms of task and dynamics, this exploratory data is merged with the trajectory data collected by the primary PPO policy itself, forming a richer and dynamics-informed dataset for training. 
\begin{figure}[t]
    \centering
    \includegraphics[width=\linewidth]{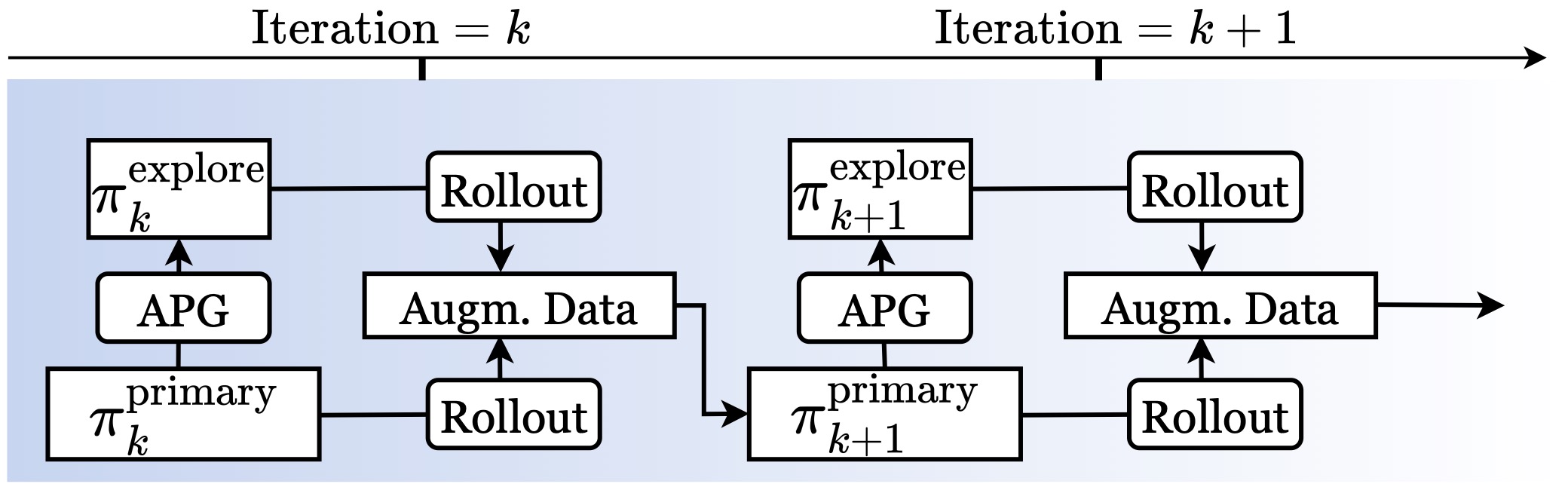}
    \caption{Policy update iteration of the proposed method. In every iteration, the exploratory policy $\pi_k^{\text{explore}}$ is discarded after exploratory data collection.}
    \label{fig:policy_iteration}
    \vspace{-10pt}
\end{figure}
Crucially, high-reward regions discovered by the exploratory policy yield large positive advantages against the primary value baseline. 
This creates a strong learning signal that guides the agent to approach a more promising region in the state-action space. In addition, we use the PPO algorithm to update the policy in the augmented dataset to keep the training stable by leveraging the proximity property of the PPO update. 
\begin{algorithm}[ht]
\footnotesize
    \caption{PPO with APG Directed Exploration}
    \label{alg:apg_ppo}
\begin{algorithmic}[1]
    \State Input: initial policy parameter ${\bd{\theta}}_0^\text{primary}$, value function parameter $\bd{\phi}_0^\text{primary}$, number of iteration $K$.
    \For{$k=0,1,2,\cdots, K$}
        \State Initialize $\pi_{{\bd{\theta}}_k^\text{explore}}$ from $\pi_{{\bd{\theta}}_k^\text{primary}}$
        \If{$k \bmod f=0$} 
        \For{$i=0,1,\cdots,e-1$} \Comment{APG Epoch}
            \State Rollout $\pi_{{\bd{\theta}}_k^\text{explore}}$ for $h$ steps with $N$ parallel agents.
            \State Query $V_{\phi_k}$ for terminal value to construct loss \eqref{eq:shac_loss}.
            \State Update ${\bd{\theta}}_k^\text{explore}$ with \eqref{eq:shac_loss}.
        \EndFor
        
        \State Rollout $\pi_{{\bd{\theta}}_k^\text{primary}}$ with $(1-\alpha)\cdot n $ environments for $\mathcal{D}_k^\text{primary}$. 
        \State Rollout $\pi_{{\bd{\theta}}_k^{\text{explore}}}$ with $\alpha\cdot n$ environments for $\mathcal{D}_k^{\text{explore}}$. 
        \State Merge $\mathcal{D}_k^\text{primary}$ and $\mathcal{D}_k^{\text{explore}}$ to form $\mathcal{D}_k^{\text{aug}}$.
        \State Compute advantage $\hat{A}_t$ based on $V_{\phi_k}^{\pi^{\text{primary}}}$, $\mathcal{D}_k^{\text{aug}}$.
        \State Update $\pi_{{\bd{\theta}}_k^\text{primary}}$ with $\mathcal{D}_k^{\text{aug}}$ by PPO loss \eqref{eq:loss_ppo}.
        \State Update $V_{\phi_k}^{\pi^{\text{primary}}}$ with $\mathcal{D}_k^\text{primary}$ and \eqref{eq:loss_value}.
        \State Discard $\pi_{{\bd{\theta}}_k^\text{explore}}$.
        \Else
        \State Perform standard PPO to update $\pi_{{\bd{\theta}}_k^\text{primary}}$.
        \EndIf
    \EndFor
    \State Output: policy parameter ${\bd{\theta}}_K^\text{primary}$, value function parameter $\bd{\phi}_K^\text{primary}$.
\end{algorithmic}
\end{algorithm}
In every policy update iteration, as illustrated in the \figref{fig:policy_iteration}, the policy parameter initialized from ${\bd{\theta}}_k^\text{primary}$ is updated with APG to yield a temporary exploratory policy $\pi_{{\bd{\theta}}_k^\text{explore}}$, where the APG update process is outlined in the bottom frame of \figref{fig:method_overview}. Following the SHAC method{\color{blue} }\cite{xuaccelerated}, the initialized exploratory policy $\pi_{{\bd{\theta}}_k^{\text{primary}}}$ is rolled out to collect $N$ sample trajectories with a horizon length $h$ parallel in the simulation. 
The terminal value of each trajectory is estimated by the critic network, forming an infinite-horizon objective to alleviate the local minima problem of the analytical gradients \cite{xuaccelerated}. 
The optimization objective for the exploratory policy can be formulated as \eqref{eq:shac_loss} following \cite{xuaccelerated}, where $\gamma$ is the discount factor. Note that the right part of \eqref{eq:shac_loss} is the state-action value function, a.k.a. the Q function. Following SHAC \cite{xuaccelerated} and BPTT \cite{metz2021gradients}, we further derive the policy gradient of $\mathcal{L}_{\bd{\theta}}^{\text{explore}}$ shown as \eqref{eq:shac_grad}, with \eqref{eq:shac_grad_cond} holding when $t_0\leq t<t_0+h$ . We use the differentiable physics engine \texttt{Brax} \cite{freeman1brax} to implement the gradient computation with JAX autograd function. Specifically, the exploratory policy is modeled as a Gaussian policy as the primary policy. Thus, the reparameterization sampling method is employed for the stochastic policy to enable the computation of $\frac{\partial \pi_{\bd{\theta}}(\bd{s})}{\partial \bd{\theta}}$ and $\frac{\partial \pi_{\bd{\theta}}(\bd{s})}{\partial \bd{s}_t}$.
\begin{equation}
        \mathcal{L}_{\bd{\theta}}^{\text{explore}} \!=\! -\frac{1}{Nh} \sum^{N}_{i=1}\left[ \left(\sum^{t_0 + h-1}_{t=t_0} \gamma^{t-t_0} r(\bd{a}_t^i,\bd{s}_t^i)\right)\! +\! \gamma^{h}V_{\phi}(\bd{s}_{t+h}^i) \right]
 \label{eq:shac_loss} 
\end{equation}
\begin{equation}
\nabla_{\bd{\theta}}\mathcal{L}_{\bd{\theta}}^{\text{explore}}=\sum_{i=1}^{N}\sum_{t=t_0}^{t_0+h-1} 
\Bigg(\frac{\partial\mathcal{L}_{\bd{\theta}}^{\text{explore}}}{\partial \bd{a}_t^{i}}\Bigg)
\Bigg(\frac{\partial \pi_{\bd{\theta}}(\bd{s}_t^i)}{\partial \bd{\theta}} \Bigg)
 \label{eq:shac_grad} 
\end{equation}

\begin{equation} 
\label{eq:shac_grad_cond} 
\left\{
\begin{aligned} 
\frac{\partial\mathcal{L}_{\bd{\theta}}^{\text{explore}}}{\partial \bd{a}_t^{i}}&= -\gamma^{t-t_0}\frac{1}{Nh}\frac{r(\bd{a}_t^i,\bd{s}_t^i)}{\partial \bd{a}_t^i} + \frac{\partial \mathcal{L}_{\bd{\theta}}^{\text{explore}}}{\partial \bd{s}_{t+1}^i}  \frac{\partial \bd{f}}{\partial \bd{a}_t^i }  \\
\frac{\partial\mathcal{L}_{\bd{\theta}}^{\text{explore}}}{\partial \bd{s}_t^{i}}&= -\gamma^{t-t_0}\frac{1}{Nh}\frac{r(\bd{a}_t^i,\bd{s}_t^i)}{\partial \bd{s}_t^i} +  \\
&\quad \quad \Bigg(\frac{\partial \mathcal{L}_{\bd{\theta}}^{\text{explore}}}{\partial \bd{s}_{t+1}^i} \Bigg) 
\Bigg(
\frac{\partial \bd{f}}{\partial \bd{s}_t^i} + 
\frac{\partial \bd{f}}{\partial \bd{a}_t^i}\frac{\partial \mathcal{\pi_{\bd{\theta}}}}{\bd{s}_t^i}
\Bigg) \\
\frac{\partial\mathcal{L}_{\bd{\theta}}^{\text{explore}}}{\partial \bd{s}_{t_0+h}^i} &= -\gamma^h \frac{1}{Nh} \frac{\partial V_\phi(s_{t_0+h}^i)}{\partial \bd{s}^i_{t_0+h}}.
\end{aligned}   
\right. 
\end{equation}

In parallel, $\pi_{{\bd{\theta}}_k^\text{explore}}$ and $\pi_{{\bd{\theta}}_k^\text{primary}}$ collect trajectory datasets $\mathcal{D}_k^{\text{explore}}$ and $\mathcal{D}_k^\text{primary}$ using $\alpha\cdot n$ and $(1-\alpha)\cdot n$ environments, respectively. These are subsequently merged to form the augmented dataset $\mathcal{D}_k^{\text{aug}}$. 
Then two dataset $\mathcal{D}_k^{\text{explore}}$ and $\mathcal{D}_k^\text{primary}$ are merged to form the augmented dataset $\mathcal{D}_k^{\text{aug}}$. Then, the primary policy parameter is updated on this augmented dataset $\mathcal{D}_k^{\text{aug}}$ with the surrogate loss function \eqref{eq:loss_ppo}, where $\hat{A}$ denotes the advantage estimate at time step $t$, computed using the Generalized Advantage Estimation (GAE) to guide the policy update. Afterward, $\pi_{{\bd{\theta}}_k^\text{explore}}$ is re-initialized from the updated primary policy to prevent APG instability \mbox{\cite{suh2022differentiable}} and excessive distributional shifts.
\begin{equation} \label{eq:loss_ppo}
\begin{aligned}
\mathcal{L}_{\bd{\theta}}=-\frac{1}{\vert \mathcal{D}_k^{\text{aug}} \vert T}
\sum_{\tau\in \mathcal{D}_k^{\text{aug}}} \sum_{t=0}^T 
\left[
\min \left(  
    \eta_t({\bd{\theta}}) \hat{A}_t(\bd{s}_t,\bd{a}_t), 
    \right. 
    \right.\\
    \left.
    \left.
    \text{clip} 
    \left(
        \eta_t({\bd{\theta}}), 1-\epsilon,1+\epsilon
    \right)\hat{A}_t(\bd{s}_t,\bd{a}_t)
\right)
+\beta\mathcal{H}\big(\pi_{\bd{\theta}}(\cdot|\bd{s}_t)\big)
\right]
\end{aligned}
\end{equation}
\begin{equation} \label{eq:ppo_impt_sample}
    \eta_t({\bd{\theta}}) = \left\{ 
            \begin{aligned}
                \frac{\pi_{{\bd{\theta}}}(\bd{a}_t|\bd{s}_t)}{\pi_{{\bd{\theta}}_k^\text{primary}}(\bd{a}_t|\bd{s}_t)}, \quad & \text{if} \ \ (\bd{a}_t, \bd{s}_t) \in \mathcal{D}_k^\text{primary}   \\
                \frac{\pi_{{\bd{\theta}}}(\bd{a}_t|\bd{s}_t)}{\pi_{{\bd{\theta}}_k^{\text{explore}}}(\bd{a}_t|\bd{s}_t)},\quad & \text{if} \ \ (\bd{a}_t, \bd{s}_t) \in \mathcal{D}_k^{\text{explore}}. 
            \end{aligned}
            \right.
\end{equation}
The surrogate loss \eqref{eq:loss_ppo} is constructed on the composite data $\mathcal{D}_k^{\text{aug}}$ collected by both a primary and an exploratory policy. To ensure unbiased policy gradient estimation, we design a piecewise importance sampling ratio $\eta_t(\bd{\theta})$ shown as \eqref{eq:ppo_impt_sample}. In addition, we also incorporate the policy entropy bonus $\mathcal{H}\big(\pi_{\bd{\theta}}(\cdot|\bd{s}_t)\big)$ in \eqref{eq:loss_ppo} following the original PPO algorithm design{\color{blue} }\cite{schulman2017proximal}.
Notably, the reason we retain data from the primary policy rather than relying solely on the exploratory data for policy learning is that APG-guided trajectories, while task-oriented, may lead the policy toward suboptimal behaviors and suffer from training instability induced by low-quality gradients of differentiable dynamics implementation{\color{blue} }\cite{suh2022differentiable}. The primary data here $\mathcal{D}_k^{\text{primary}}$ complements this by preserving broader exploration.

\subsubsection{Critic Update}
For value estimation, the value function $V_{\phi_k}^{\pi^{\text{primary}}}$ is trained by minimizing the mean squared error against the estimated returns $\hat{R}_t$, as defined in \eqref{eq:loss_value}.
\begin{equation} \label{eq:loss_value}
    \mathcal{L}_{\phi}=\frac{1}{|\mathcal{D}_k^{\text{primary}}|T}\sum_{\tau\in\mathcal{D}_k^{\text{primary}}}\sum_{t=0}^T\left(V_{\phi_k}^{\pi^{\text{primary}}}(s_t)-\hat{R}_t\right)^2
    \vspace{-10pt}
\end{equation}
Notably, value function $V_{\phi_k}^{\pi^{\text{primary}}}$ fitting is on the data collected on the primary policy $\pi_{\bd{\theta}_k^{\text{primary}}}$ rather than $\pi_{\bd{\theta}_k^{\text{explore}}}$, which means $V_{\phi_k}^{\pi^{\text{primary}}}$ only depends on $\pi_{\bd{\theta}_k^{\text{primary}}}$. Consequently, $V_{\phi_k}^{\pi^{\text{primary}}}$ could provide an unbiased on-policy value baseline for the advantage estimation. 
We use the APG update frequency $f$ and the number of update epochs $e$ to control the generation and usage of the exploratory policy. Specifically, an APG update is performed $e$ times to obtain the exploratory policy and the exploratory data collection is executed once every $f$ training iterations. 
The detailed pipeline is given in \pseref{alg:apg_ppo} and \figref{fig:method_overview}.

\subsection{Analysis on Mechanism of Exploration Augmentation} 
In this section, we analyze how the proposed method guides the policy learning process through the biased advantage introduced by the analytical gradient. 
\subsubsection{Directed Exploration versus Undirected Exploration} \label{sec:directed_explore} 

Standard on-policy exploration relies on undirected stochasticity or novel state visiting, which promotes broad coverage but is often sample-inefficient in high-dimensional spaces due to its disregard for potential state values \cite{schulman2017proximal, ziebart2008maximum, haarnoja2018soft, burdaexploration}.
In contrast, we generate a directed exploratory policy, $\pi_{\bd{\theta}^{\text{explore}}}$, by directly backpropagating the analytical gradient $\nabla_{\bd{\theta}}\mathcal{L}_{\bd{\theta}}^{\text{explore}}$ through the differentiable dynamics. 
Since the objective \eqref{eq:shac_loss} estimates the discounted return, the resulting gradient $\nabla_{\bd{\theta}}\mathcal{L}_{\bd{\theta}}^{\text{explore}}$ explicitly points toward the direction of steepest return maximization in the short horizon.
Consequently, $\pi_{\bd{\theta}_k^{\text{explore}}}$ executes directed exploration, collecting trajectories biased toward high-reward regions that the primary policy may not yet cover.
Integrating this informative data $\mathcal{D}^{\text{explore}}$ into the update step provides a high-quality learning signal, thereby accelerating convergence toward optimal policy regions.

\subsubsection{Biased Advantage as Augmented Learning Signal} \label{sec:biased_advantage}
In this section, the theoretical mechanism of how the exploratory data improves policy learning is detailed.
As mentioned in \secref{sec:directed_explore}, the exploratory policy training loss \eqref{eq:shac_loss} is an approximation for the negative normalized discounted return, as shown in \eqref{eq:shac_loss_approx}.
Here, we take the ideally precise form of exploratory policy loss for theoretical analysis. 
In the following, we try to justify that the expectation of advantage computed on the exploratory data $\mathcal{D}^{\text{explore}}$ is equal to or greater than that computed on $\mathcal{D}^{\text{primary}}$.
\begin{equation} \label{eq:shac_loss_approx}
\begin{aligned}
    \mathcal{L}_{\bd{\theta}}^{\text{explore}}\! &=\!-\frac{1}{hN}\sum^{N}_{i=1}\underbrace{\Bigg[ 
        {\left(\sum^{t_0 + h-1}_{t=t_0} \gamma^{t-t_0} r(\bd{a}_t^i,\bd{s}_t^i)\right) \! 
         + \! \gamma^{h}V_{\phi}(\bd{s}_{t+h}^i)}
         \Bigg]}_\text{Discounted Return Approximation} \\
        &\approx -\frac{1}{h} \cdot \mathbb{E}_{\tau \sim \pi_{\bd{\theta}}}\left[ \sum_{t}\gamma^t r_t\right]
\end{aligned}
\vspace{-10pt}
\end{equation}

\begin{lemma}[Local Policy Improvement from an APG Update]
\label{lem:policy_improvement}
Let the performance objective be $J(\bd{\theta})=\mathbb{E}_{\tau \sim \pi_{\bd{\theta}}}\left[ \sum_{t}\gamma^t r_t\right]$.
Assume \(J(\bd{\theta})\) is \(L\)-smooth, and
\[
\bd{\theta}_k^{\text{explore}}
=
\bd{\theta}_k+\eta\nabla_{\bd{\theta}}J(\bd{\theta}_k), \quad 0<\eta<2/L.
\]
Then
$J(\bd{\theta}_k^{\text{explore}})\!>\! J(\bd{\theta}_k)$,
whenever
$\nabla_{\bd{\theta}}J(\bd{\theta}_k)\neq 0$.
\end{lemma}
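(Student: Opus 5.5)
The plan is to apply the standard descent lemma for $L$-smooth functions, in its ascent form. By $L$-smoothness, $\nabla_{\bd{\theta}}J$ is $L$-Lipschitz. Hence for any $\bd{\theta},\bd{\theta}'$ we have the quadratic lower bound
\[
J(\bd{\theta}') \ge J(\bd{\theta}) + \nabla_{\bd{\theta}}J(\bd{\theta})^\top(\bd{\theta}'-\bd{\theta}) - \frac{L}{2}\|\bd{\theta}'-\bd{\theta}\|^2 .
\]
I would derive this briefly rather than cite it. Write $J(\bd{\theta}')-J(\bd{\theta})=\int_0^1 \nabla J(\bd{\theta}+s(\bd{\theta}'-\bd{\theta}))^\top(\bd{\theta}'-\bd{\theta})\,\mathrm{d}s$. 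Then add and subtract $\nabla J(\bd{\theta})$, and bound the remainder using Cauchy--Schwarz together with the Lipschitz property. This gives a remainder of magnitude at most $\int_0^1 sL\|\bd{\theta}'-\bd{\theta}\|^2\mathrm{d}s=\frac{L}{2}\|\bd{\theta}'-\bd{\theta}\|^2$.

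Next, substitute the APG step $\bd{\theta}'=\bd{\theta}_k^{\text{explore}}=\bd{\theta}_k+\eta\nabla_{\bd{\theta}}J(\bd{\theta}_k)$. Writing $g=\nabla_{\bd{\theta}}J(\bd{\theta}_k)$, the bound becomes
\[
J(\bd{\theta}_k^{\text{explore}})-J(\bd{\theta}_k)\ \ge\ \eta\|g\|^2-\frac{L\eta^2}{2}\|g\|^2=\eta\Big(1-\frac{L\eta}{2}\Big)\|g\|^2 .
\]
For $0<\eta<2/L$ the factor $\eta(1-L\eta/2)$ is strictly positive. Whenever $g\neq 0$ we also have $\|g\|^2>0$, so the right-hand side is strictly positive and $J(\bd{\theta}_k^{\text{explore}})>J(\bd{\theta}_k)$. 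I would also note in passing that the choice $\eta=1/L$ maximizes the guaranteed gain, giving $\|g\|^2/(2L)$; this quantifies the improvement used later when the exploratory data are argued to carry positive advantage.

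The argument itself is routine, and the only real obstacle is interpretive. The lemma treats the APG step as exact gradient ascent on the true objective $J$. In Algorithm~\ref{alg:apg_ppo}, however, the update uses the surrogate gradient of \eqref{eq:shac_loss} computed via \eqref{eq:shac_grad}, with a learned terminal value $V_\phi$ and a finite batch of $N$ rollouts. I would therefore state explicitly that we invoke the idealization \eqref{eq:shac_loss_approx}: the exploratory gradient is taken to equal $\nabla_{\bd{\theta}}J$, with the positive factor $1/h$ absorbed into the step size $\eta$. Under this premise the lemma follows directly from the inequality above.

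I might add a remark that the same conclusion survives if the estimated gradient $\hat g$ merely satisfies $\hat g^\top g\ge c\|g\|^2$ and $\|\hat g\|\le C\|g\|$, provided $\eta$ is small enough. I would keep this outside the formal statement.
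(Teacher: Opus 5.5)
Your proposal is correct and follows the paper's proof: the paper also applies the ascent form of the descent lemma with $\bd{\theta}'=\bd{\theta}_k+\eta\nabla_{\bd{\theta}}J(\bd{\theta}_k)$ and concludes from the factor $\eta-\frac{L}{2}\eta^2>0$ for $0<\eta<2/L$. You go slightly further by deriving the lemma via the integral form of the mean value theorem (the paper just invokes it) and by stating explicitly that the exploratory gradient is idealized as $\nabla_{\bd{\theta}}J$, which the paper leaves implicit.
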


\begin{proof}
By the ascent counterpart of the descent lemma,
\begin{equation}
J(\bd{\theta}')
\ge
J(\bd{\theta})
+
\nabla_{\bd{\theta}}J(\bd{\theta})^\top(\bd{\theta}'-\bd{\theta})
-
\frac{L}{2}\|\bd{\theta}'-\bd{\theta}\|^2.
\end{equation}
Let
$
\bd{\theta}'=\bd{\theta}_k^{\text{explore}}
=
\bd{\theta}_k+\eta\nabla_{\bd{\theta}}J(\bd{\theta}_k)$.
Then
\begin{equation}
J(\bd{\theta}_k^{\text{explore}})
\ge
J(\bd{\theta}_k)
+
\left(\eta-\frac{L}{2}\eta^2\right)
\|\nabla_{\bd{\theta}}J(\bd{\theta}_k)\|^2.
\end{equation}
Therefore, if \(0\!<\!\eta\!<\!2/L\), we have
\(J(\bd{\theta}_k^{\text{explore}})\!>\! J(\bd{\theta}_k)\)
whenever
\(\nabla_{\bd{\theta}}J(\bd{\theta}_k)\neq 0\).
Hence, \(\pi_{\bd{\theta}_k^{\text{explore}}}\) is a local improvement over \(\pi_{\bd{\theta}_k}\).
\end{proof}

\begin{theorem}[Superiority of the Exploratory Policy in Expected Advantage]
\label{thm:advantage_superiority}
    Let $\pi_k^{\text{primary}}$ be the primary policy and $\pi_k^{\text{explore}}$ be the exploratory policy obtained from Lemma \ref{lem:policy_improvement}. 
    We assume that the value function $V_k$ is a perfect estimator of the true value of the primary policy: $V_k(s) = V^{\pi_k^{\text{primary}}}(s)$. Then
    \[
        \mathbb{E}_{(s,a) \sim \pi_k^{\text{explore}}}[\hat{A}^{\pi_k^{\text{primary}}}(s,a)] >\mathbb{E}_{(s,a) \sim \pi_k^{\text{primary}}}[\hat{A}^{\pi_k^{\text{primary}}}(s,a)]
    \]
\end{theorem}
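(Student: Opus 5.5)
The plan is to reduce the statement to the performance difference lemma and then invoke Lemma~\ref{lem:policy_improvement}. Since $V_k = V^{\pi_k^{\text{primary}}}$ exactly, I will take the advantage estimate to be unbiased, $\mathbb{E}[\hat{A}^{\pi_k^{\text{primary}}}(s,a)\mid s,a] = A^{\pi_k^{\text{primary}}}(s,a) = Q^{\pi_k^{\text{primary}}}(s,a) - V^{\pi_k^{\text{primary}}}(s)$. For the one-step TD/GAE form this holds by the tower property over the next state. I will also interpret ``$(s,a)\sim\pi$'' as the normalized discounted state-action visitation $d^{\pi}(s)\pi(a\mid s)$ induced by $\pi$. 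Both expectations then become expectations of the true advantage of the primary policy.

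\textbf{Step 1 (primary side).} Under the primary policy the expected advantage is zero. For every $s$ we have $\mathbb{E}_{a\sim\pi_k^{\text{primary}}(\cdot\mid s)}[A^{\pi_k^{\text{primary}}}(s,a)] = V^{\pi_k^{\text{primary}}}(s)-V^{\pi_k^{\text{primary}}}(s)=0$, so averaging over $d^{\pi_k^{\text{primary}}}$ still gives $0$. This is where the perfect-critic assumption enters: with an imperfect $V_k$ the right-hand side would carry a bias term.

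\textbf{Step 2 (exploratory side).} The performance difference lemma (Kakade--Langford) gives
\[
J(\bd{\theta}_k^{\text{explore}})-J(\bd{\theta}_k)=\frac{1}{1-\gamma}\,\mathbb{E}_{s\sim d^{\pi_k^{\text{explore}}},\,a\sim\pi_k^{\text{explore}}}\big[A^{\pi_k^{\text{primary}}}(s,a)\big].
\]
I will sketch its derivation by telescoping $V^{\pi_k^{\text{primary}}}(s_t)$ along trajectories of $\pi_k^{\text{explore}}$. By Lemma~\ref{lem:policy_improvement}, the left side is strictly positive whenever $\nabla_{\bd{\theta}}J(\bd{\theta}_k)\neq 0$ and $0<\eta<2/L$. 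Hence the exploratory expectation is strictly positive, which is strictly greater than the zero obtained in Step 1. This completes the argument.

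\textbf{Main obstacle.} The delicate point is matching the sampling distribution in the statement with the one the performance difference lemma requires. ``$(s,a)\sim\pi_k^{\text{explore}}$'' must mean the discounted visitation of the exploratory policy, not the empirical on-policy distribution of GAE samples. I will make this convention explicit, and also handle the finite-horizon case by using $\sum_t\gamma^t$ weights instead of the $1/(1-\gamma)$ normalization.

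I will also state the conditions required for strictness. These are the nonzero gradient, inherited from Lemma~\ref{lem:policy_improvement}, and the idealization that the APG step is exact gradient ascent on $J$, as in \eqref{eq:shac_loss_approx}. Without the nonzero-gradient condition only $\ge$ holds.
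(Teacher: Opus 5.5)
Your proposal is correct and follows essentially the same route as the paper. The paper also shows the expected primary advantage is zero under $\pi_k^{\text{primary}}$, then uses the Kakade--Langford performance difference identity with Lemma~\ref{lem:policy_improvement} to get a strictly positive expected advantage under $\pi_k^{\text{explore}}$. The conventions you make explicit are ones the paper's proof leaves implicit: the discounted visitation for $(s,a)\sim\pi$, the nonzero-gradient requirement for strictness, and conditional unbiasedness of $\hat{A}$ in the one-step TD form. Keep that last restriction, because it fails pointwise for GAE with $\lambda>0$ on exploratory data.
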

\begin{proof}
    For data from $\pi^{\text{primary}}_k$, 
    \begin{equation}
    \vspace{-5pt}
        \mathbb{E}_{(s,a) \sim \pi_k^{\text{primary}}}[\hat{A}^{\pi_k^{\text{primary}}}(s,a)]=0.
    \end{equation}
    For data from $\pi^{\text{explore}}_k$, the policy improvement theorem{\color{blue} }\cite{kakade2002approximately,schulman2015trust} gives
    \begin{equation} \label{eq:policy_improve}
    \vspace{-5pt}
        J(\pi_k^{\text{explore}})-J(\pi_k^{\text{primary}})\!=\!\mathbb{E}_{\tau\sim\pi_k^{\text{explore}}}\left[\!\sum_t \gamma^t \hat{A}^{\pi_k^{\text{primary}}}(\bd{s}_t,\bd{a}_t)\!\right] \! > \! 0.
    \end{equation}
    Hence, 
    \begin{equation}
    \vspace{-5pt}
        \mathbb{E}_{(s,a) \sim \pi_k^{\text{explore}}}[\hat{A}^{\pi_k^{\text{primary}}}(s,a)]\!>\! \mathbb{E}_{(s,a) \sim \pi_k^{\text{primary}}}[\hat{A}^{\pi_k^{\text{primary}}}(s,a)] \!=\! 0.
    \end{equation}
    This completes the proof.
\end{proof}
\theoref{thm:advantage_superiority} formally establishes that data collected by the exploratory policy are superior on average to the primary policy's own value baseline, providing the theoretical grounding for our learning mechanism.  The systematic bias forms the source of the augmented learning signal, where the strong gradient from the non-negative advantage directs the policy to increase the action probability in the newly discovered high-return trajectories. 
While \theoref{thm:advantage_superiority} provides the theoretical grounding, the premises may not hold perfectly in practice. 
The analytical gradient quality and thus the consequent policy improvement are contingent on the fidelity of the differentiable simulator \cite{suh2022differentiable}. 
Furthermore, the learned value function for the terminal value estimation in the optimization objective for obtaining the exploratory policy is an imperfect estimator, especially in the early training stage.
Consequently, while the data collected by the exploratory policy provides a powerful guiding signal on the whole, the advantage estimates may not be strictly greater than those from the primary data at every point in the training. 
To handle potentially suboptimal data from a corrupted $\pi_{\bd{\theta}^\text{explore}}$, our framework uses the value function as a safety filter. Such trajectories yield negative advantage estimates via GAE, allowing PPO objective to effectively down-weight or \textit{reject} them, thus shielding the primary policy.

In general, the exploratory policy acts as a \textit{scout} that identifies promising regions, while the primary policy learns from these curated trajectories. This reduces inefficient wandering, making learning focus the learning on demonstrably better behaviors and accelerating convergence.

\section{Experiment Results} \label{sec:experiment}
We aim to answer these two primary questions with our experiments: (1) Can our exploration augmentation mechanism help the training process access the higher-reward region in the state-action space and further improve the learning sample efficiency through that biased exploration? 
(2) Is the proposed method deployable in the real world?
To this end, we evaluate on a series of comparative experiments on benchmark environments implemented in \texttt{MuJoCoPlayground} (MJP) \cite{zakka2025mujoco} and a set of sim-to-real deployment experiments on a biped point-foot robot, LimX TRON. 
Training runs on an RTX 4090 GPU with an AMD EPYC 9354 CPU.
We employed the JAX-based PPO implementation in \texttt{Brax} \cite{freeman1brax}, and we implement our method on top of differentiable dynamics provided by \texttt{Brax} and \texttt{MuJoCo-XLA} (MJX) \cite{mujoco_mjx_docs, todorov2012mujoco}. 
We conducted preliminary hyperparameter searches of our method for each task and report the best settings.

\subsection{Comparative Evaluation with Benchmark Experiment}
\begin{figure*}[htbp]
    \centering        
        \begin{minipage}{\textwidth}
            \centering
            \subfloat{\includegraphics[width=0.22\textwidth]{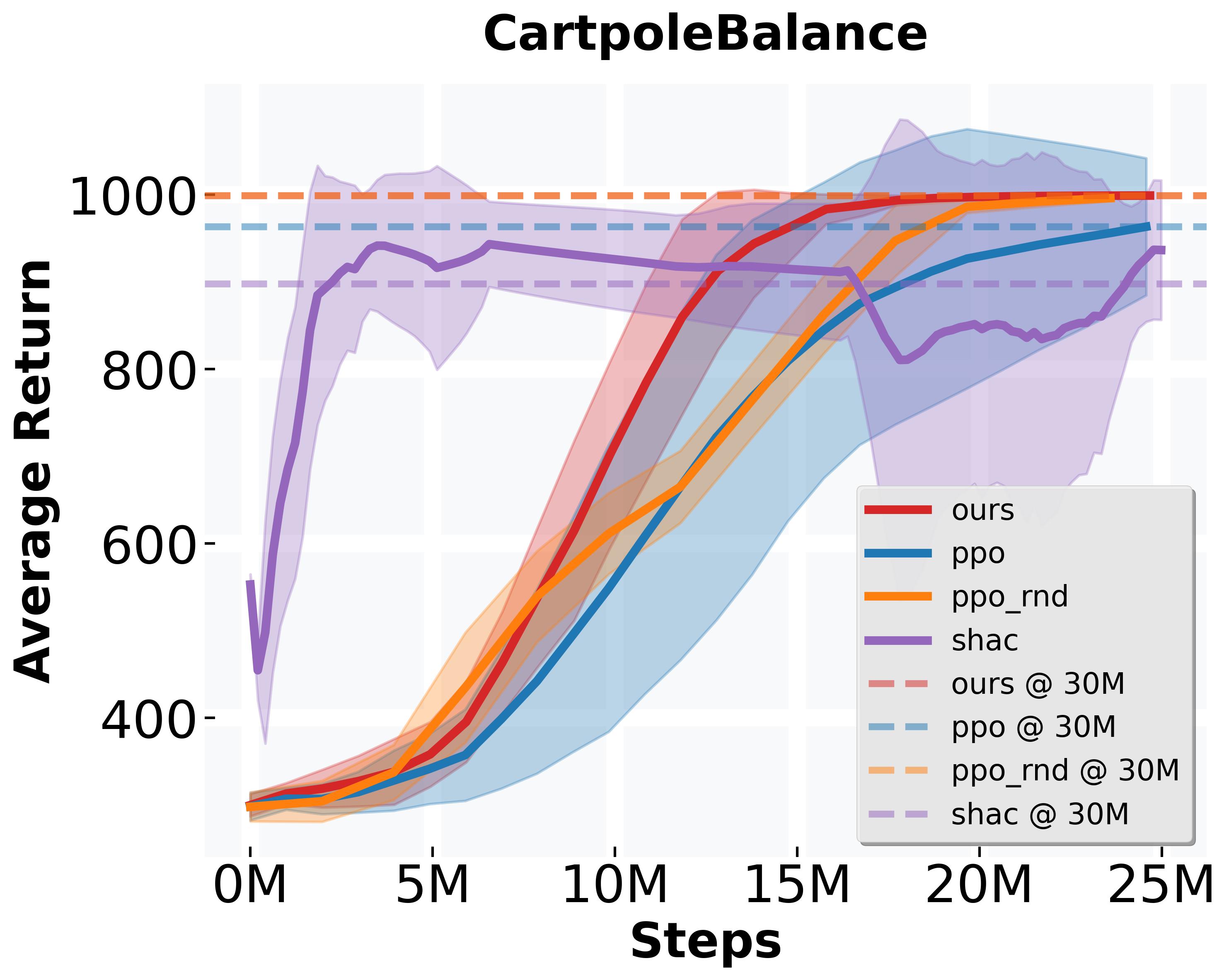}} \hfill
            \subfloat{\includegraphics[width=0.22\textwidth]{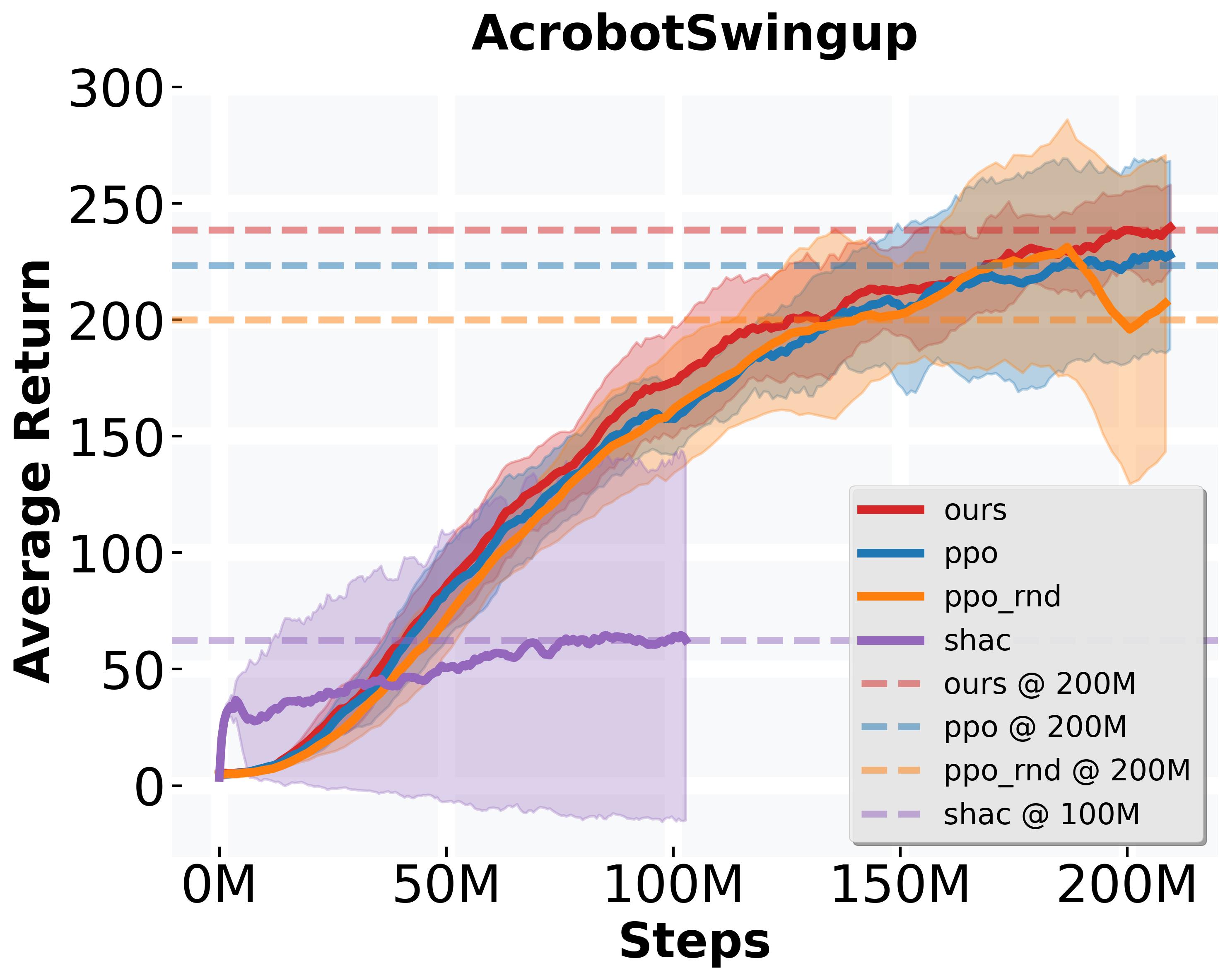}} \hfill
            \subfloat{\includegraphics[width=0.22\textwidth]{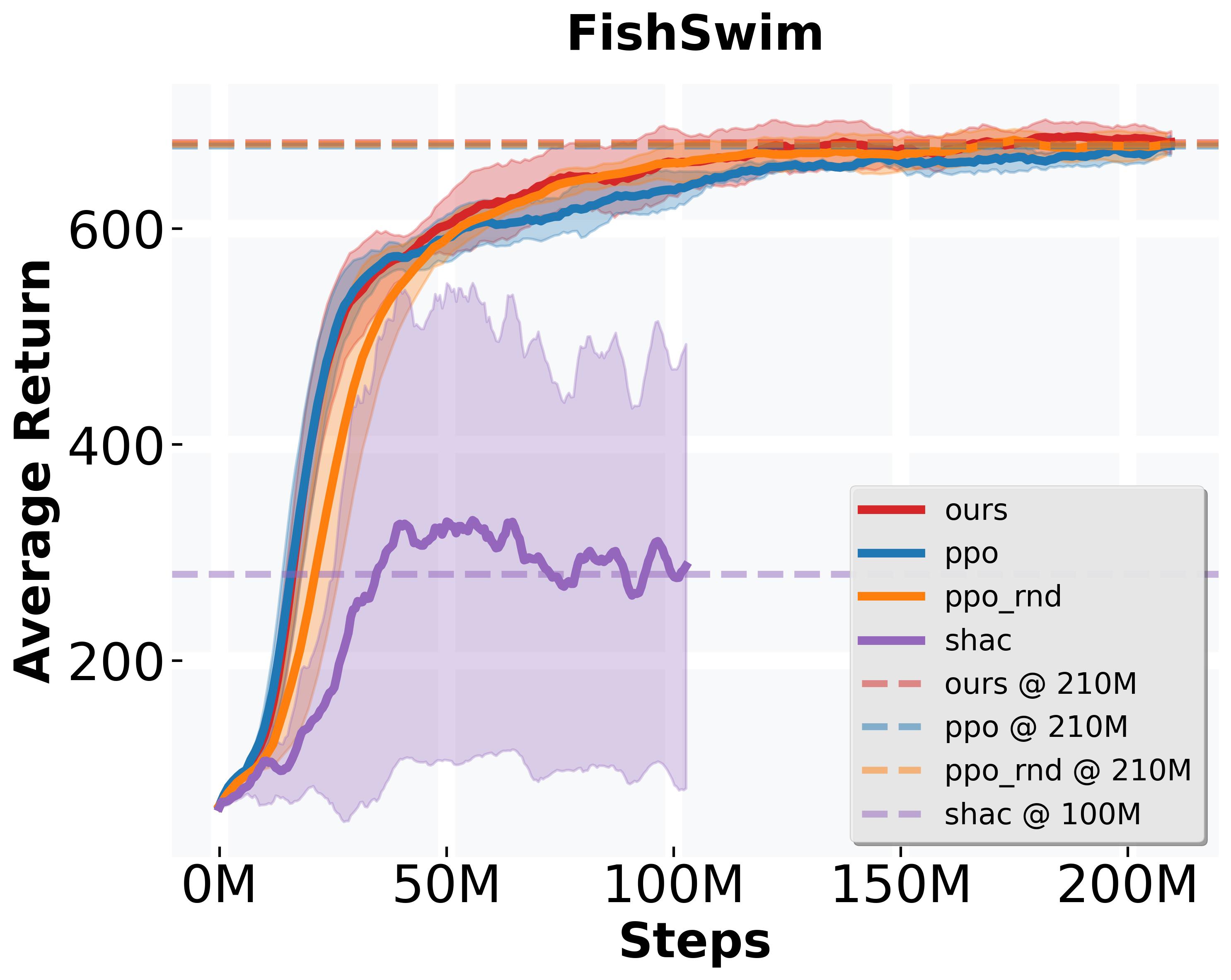}} \hfill
            \subfloat{\includegraphics[width=0.22\textwidth]{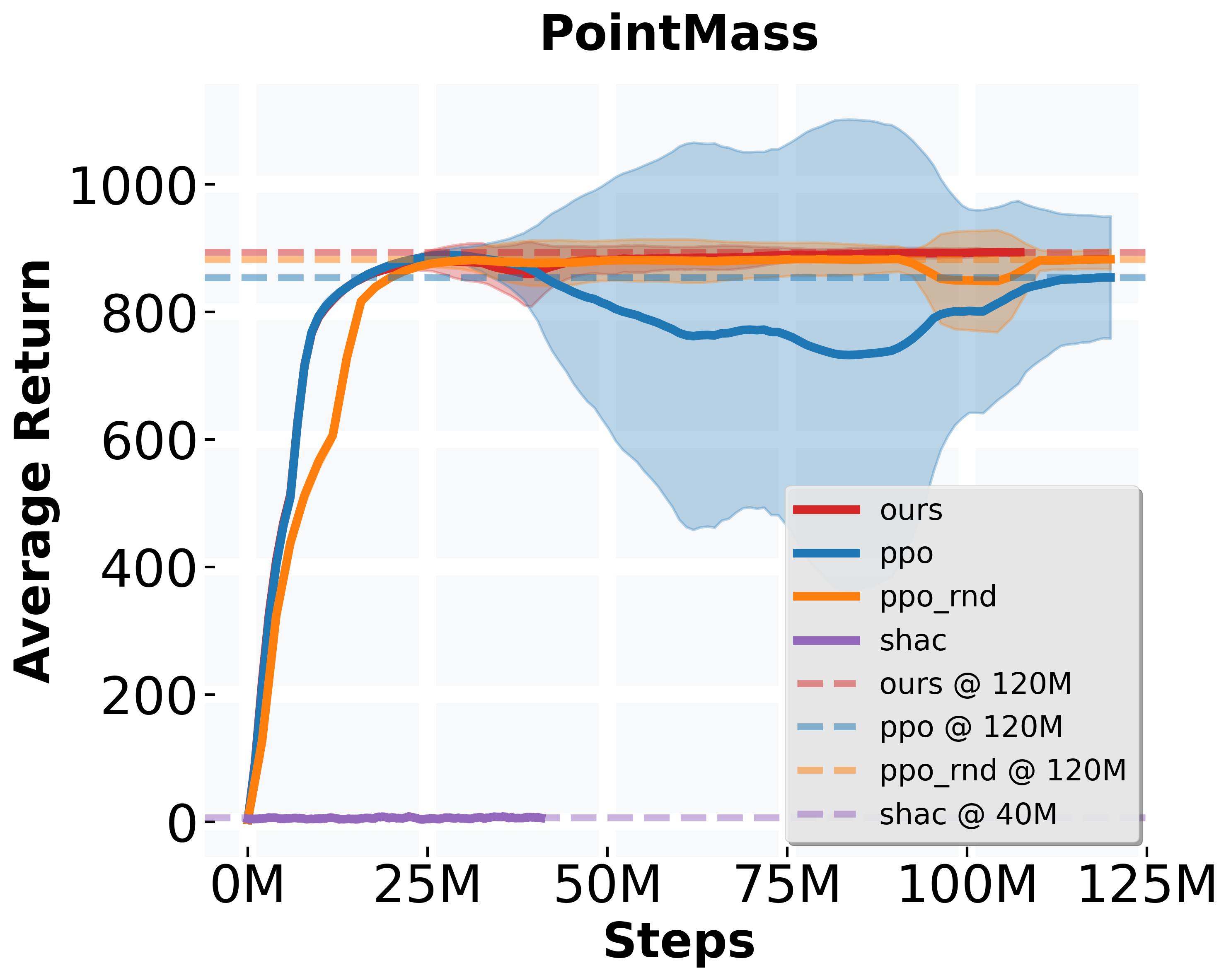}} 
            \\ 
            \subfloat{\includegraphics[width=0.22\textwidth]{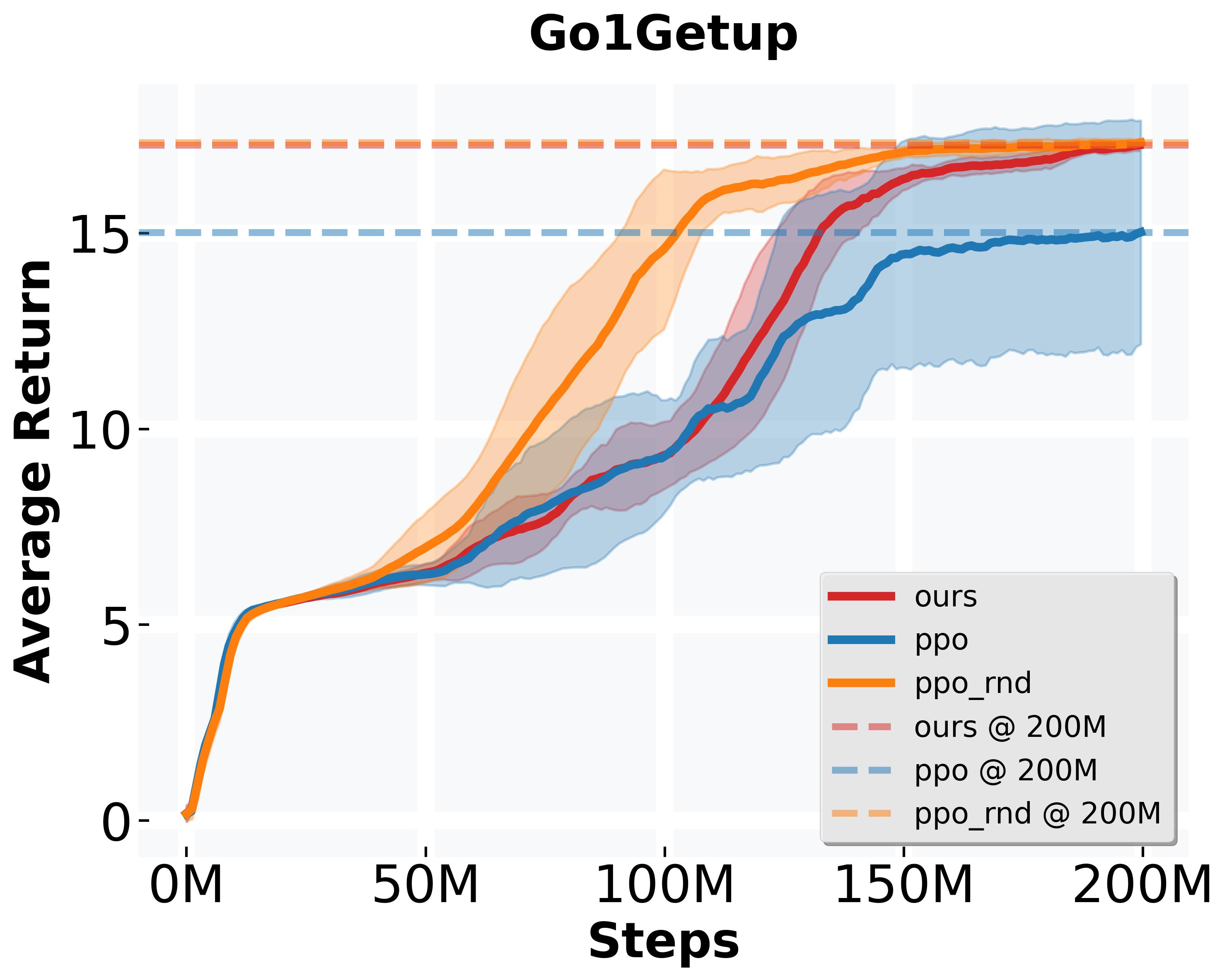}} \hfill
            \subfloat{\includegraphics[width=0.22\textwidth]{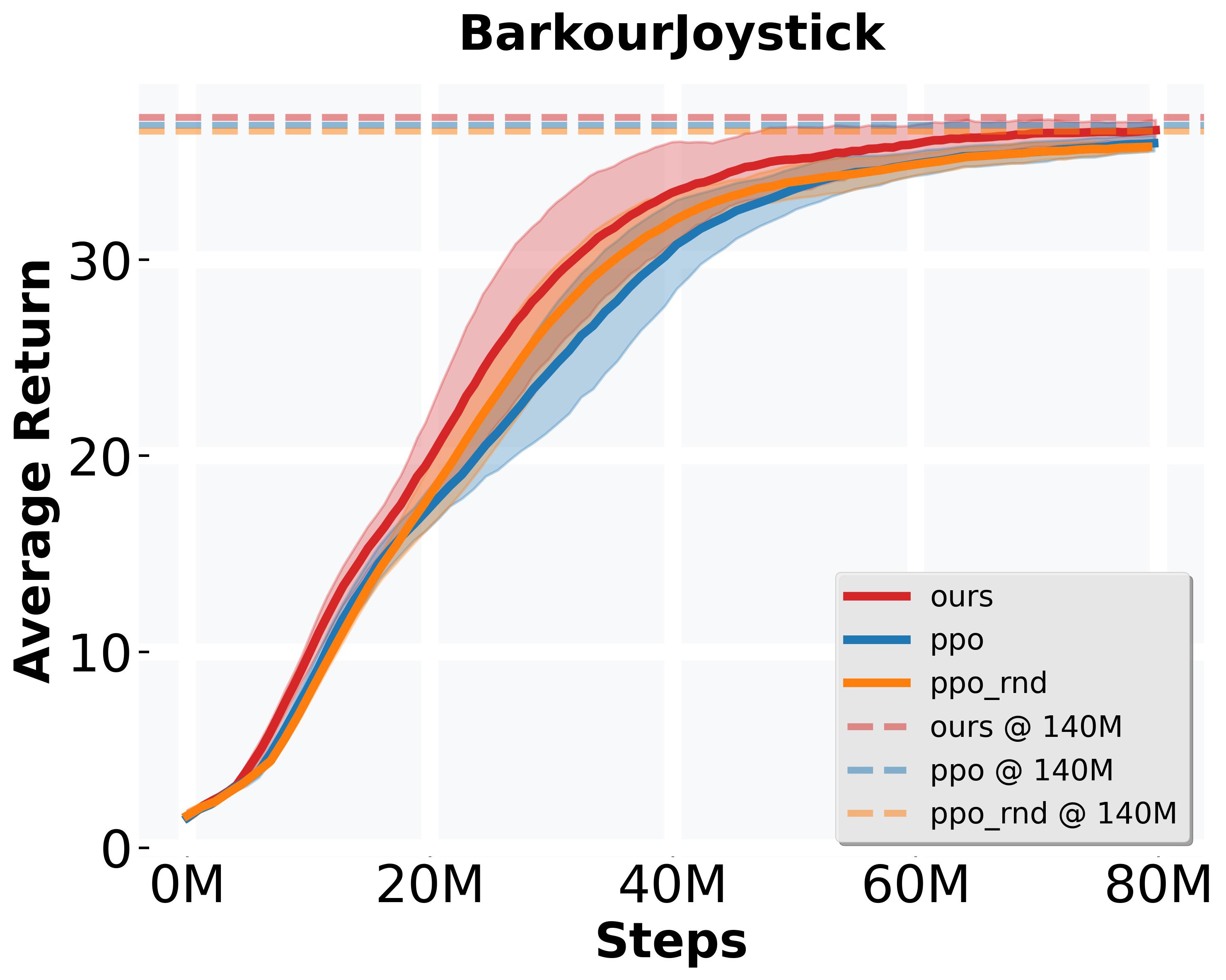}} \hfill
            \subfloat{\includegraphics[width=0.22\textwidth]{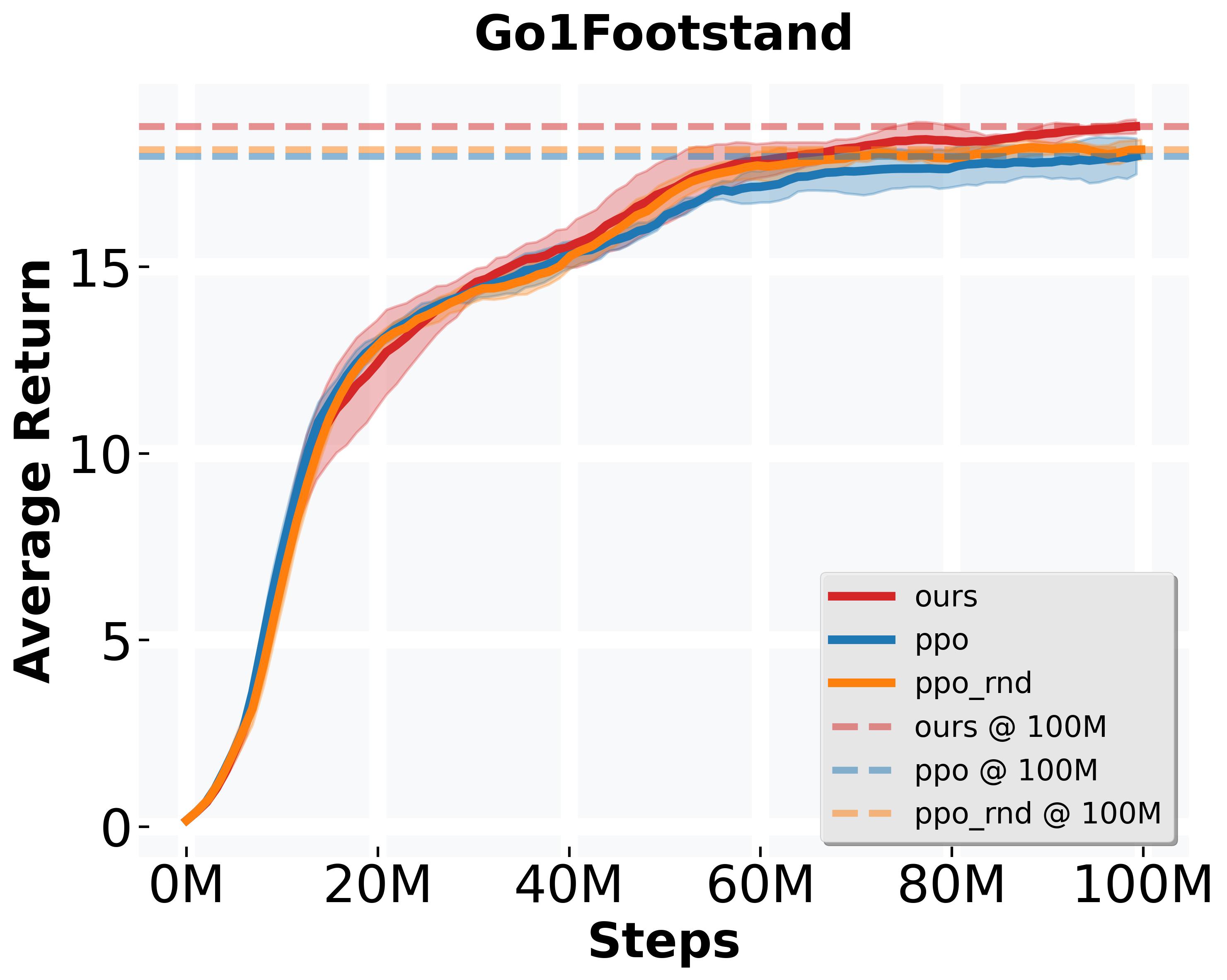}} \hfill
            \subfloat{\includegraphics[width=0.22\textwidth]{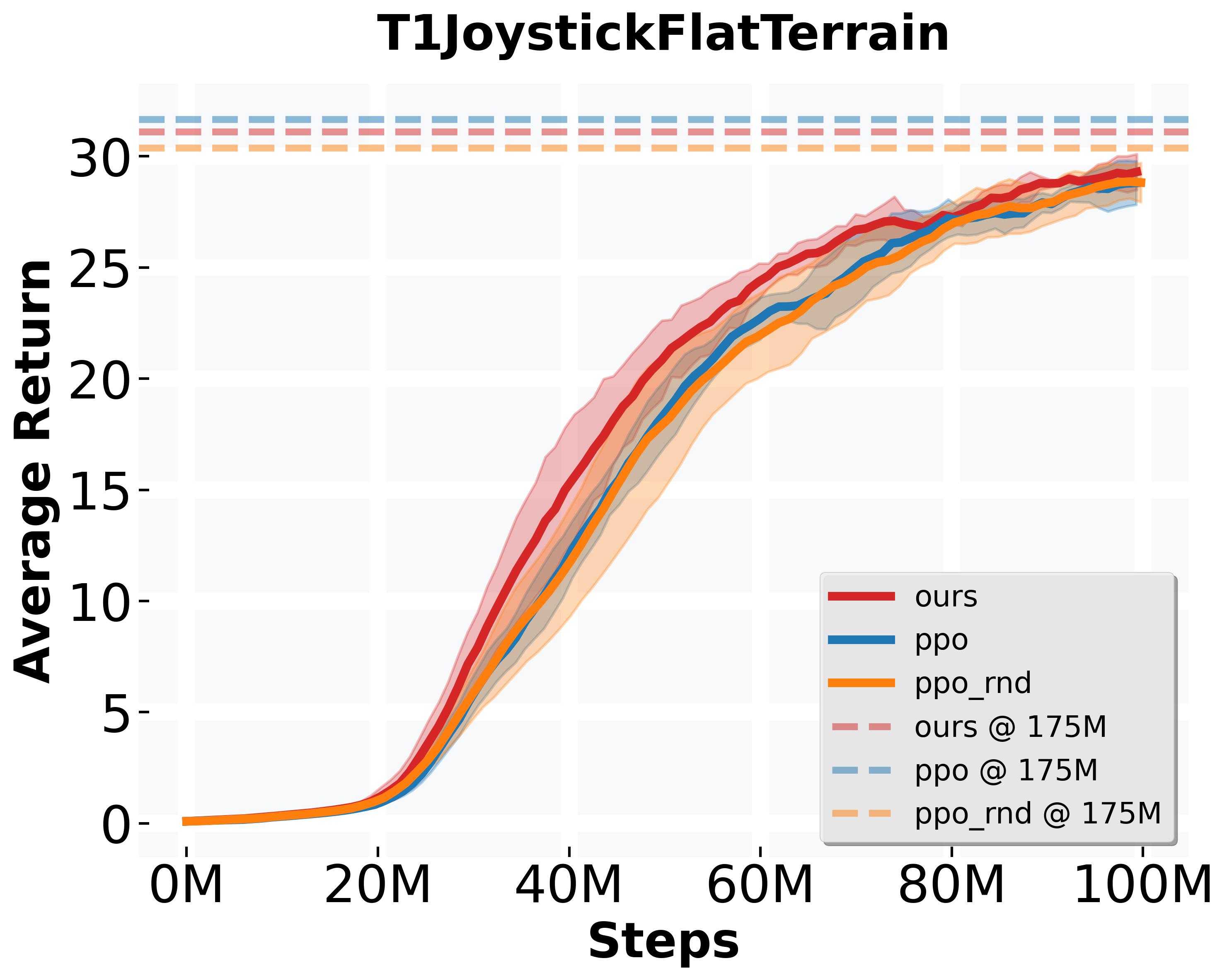}}
        \end{minipage}
    \caption{Training curve of 8 benchmark tasks comparing the proposed method against the PPO baseline, the PPO with RND technique. The curves of SHAC serve as a reference. Solid lines and shaded regions depict the mean and standard deviation among the five trials, respectively. 
    Across most benchmark tasks, our method achieves higher or matched asymptotic performance over the PPO baseline and PPO with RND technique, with improved sample efficiency and better training stability.}
    \label{fig:bcmk_result}
    \vspace{-15pt}
\end{figure*}
We validate the effectiveness of our method against PPO with entropy bonus and PPO with RND exploration on eight benchmarks in MJP: four \texttt{dm\_control\_suite} tasks displayed in the top row of \figref{fig:bcmk_result} and four locomotion tasks shown in the bottom row. And we provide the curve of SHAC method on the four \texttt{dm\_control\_suite} tasks as a reference for the performance of pure APG. 
The PPO part of its combination with RND and our method shares the same settings as baseline PPO, while our method’s hyperparameters are listed in \tabref{tab:benchmark_params}. And the SHAC follows $h$ of 32, $\lambda$ of 0.99, $N$ of 128 and $\gamma$ of 0.95. Reported environment steps include both PPO and APG rollouts.
 
\begin{table}[bp]
    \centering
    \caption{Hyperparameters of our method in the experiments.}
    \begin{tabular}{c|ccccccc}
    \hline
         Environment & $f$ & $h$ & $lr$ & $\gamma$ & $N$ & $\alpha$ & $e$    \\
    \hline
         CartpoleBalance & 1 & 4 &3e-5 &0.95 &256 &0.5 &2 \\
         AcrobotSwingup & 1&8 &1e-4 & 0.95 &256 &0.5 &2 \\
         FishSwim &1 &4 &3e-5 &0.95 &256 &0.5 &5 \\
         PointMass &1 &4 &3e-5 &0.95 &256 &0.5 &5 \\
         Go1Getup &5 &32 &3e-5 &0.95 &256 &0.5 &5 \\
        BarkourJoystick &1 &4 &3e-5 &0.95 &256 &0.5 &5 \\
         Go1FootStand &1 &8 &3e-5 &0.95 &256 &0.5 &5 \\
         T1JoystickFlatTerrain &1 &4 &3e-5 &0.95 &256 &0.5 &5 \\
         TRONLocomotion & 1 & 4 & 3e-5 & 0.95 & 256 & 0.5 & 5 \\
    \hline 
    \end{tabular}
    \label{tab:benchmark_params}
    \vspace{-10pt}
\end{table}

\figref{fig:bcmk_result} illustrates the training curves across eight benchmark tasks, showing the progression of episodic average return against the number of environment steps. 
Our benchmark results exhibit consistently improved sample efficiency and better training stability against the PPO and PPO with the RND method across most of the tests, though the degree of improvement varies by task. A noteworthy exception is test \texttt{Go1Getup}, where a quadruped robot needs to learn the fall recovery from an
arbitrary initial state, RND performs better than our method. We attribute this to the highly
discontinuous contact dynamics. Under such task and dynamics, the initial analytical gradients
can be noisy or misleading in our method and the undirected novelty-seeking of RND is proven to
be more effective.
This acceleration against the PPO with plain entropy maximization directly demonstrates the effectiveness of the proposed exploration guidance mechanism. And the comparison with PPO integrated with the novelty-based RND exploration technique shows that ours offers a slight superiority. Beyond the performance comparison, we think that our method is conceptually distinct from the novelty-based RND method, and the value of ours lies in this novel exploration mechanism, which utilizes the underlying dynamics priors of the system to augment the exploration.

\figref{fig:advantage_diff} shows that the exploratory data yields consistently higher advantage, empirically validating \ref{thm:advantage_superiority} and confirming that our mechanism effectively guides the agent toward high-return regions.
Furthermore, our method enhances training stability, evidenced by reduced variance across most tasks; notably, in \texttt{PointMass}, it maintains stable convergence while PPO suffers performance collapse. 
 Finally, sensitivity analysis in \mbox{\figref{fig:sensitivity}} confirms high robustness to update frequency $f$, showing consistent performance across different $f$. Although the horizon length $h$ affects gain magnitude, our method consistently outperforms the PPO baseline across a broad range.
\begin{figure}[htb!]
    \centering
    \includegraphics[width=0.5\linewidth]{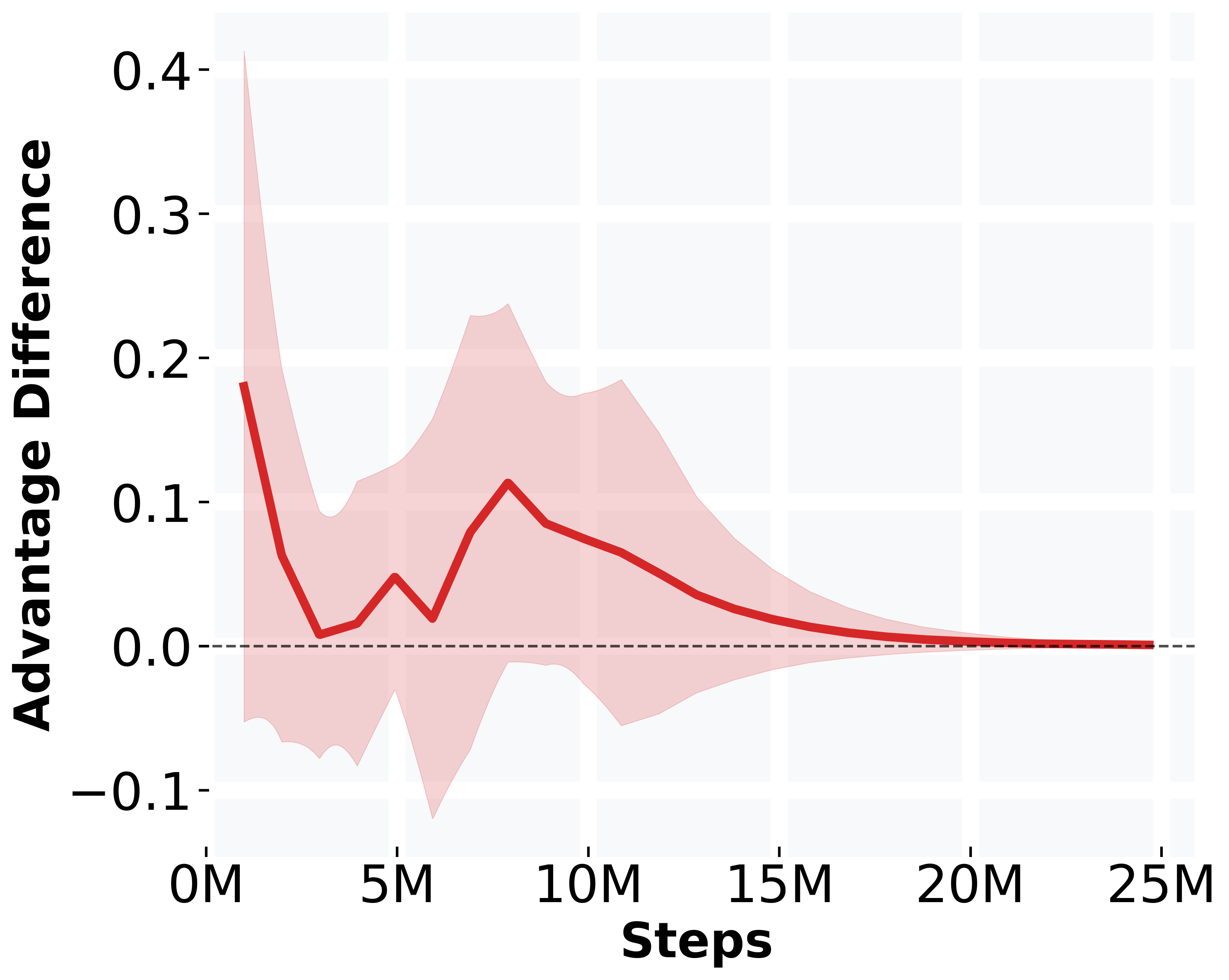}
    \caption{Advantage difference between the data collected by the exploratory policy and that of the primary policy in \texttt{CartpoleBalance} task. Data collected with 5 trials and smoothed with a factor of 0.7.}
    \label{fig:advantage_diff}
    \vspace{-15pt}
\end{figure}

\begin{figure}[htb!]
    \centering
    \begin{minipage}{\linewidth}
    \subfloat{\includegraphics[width=0.5\linewidth]{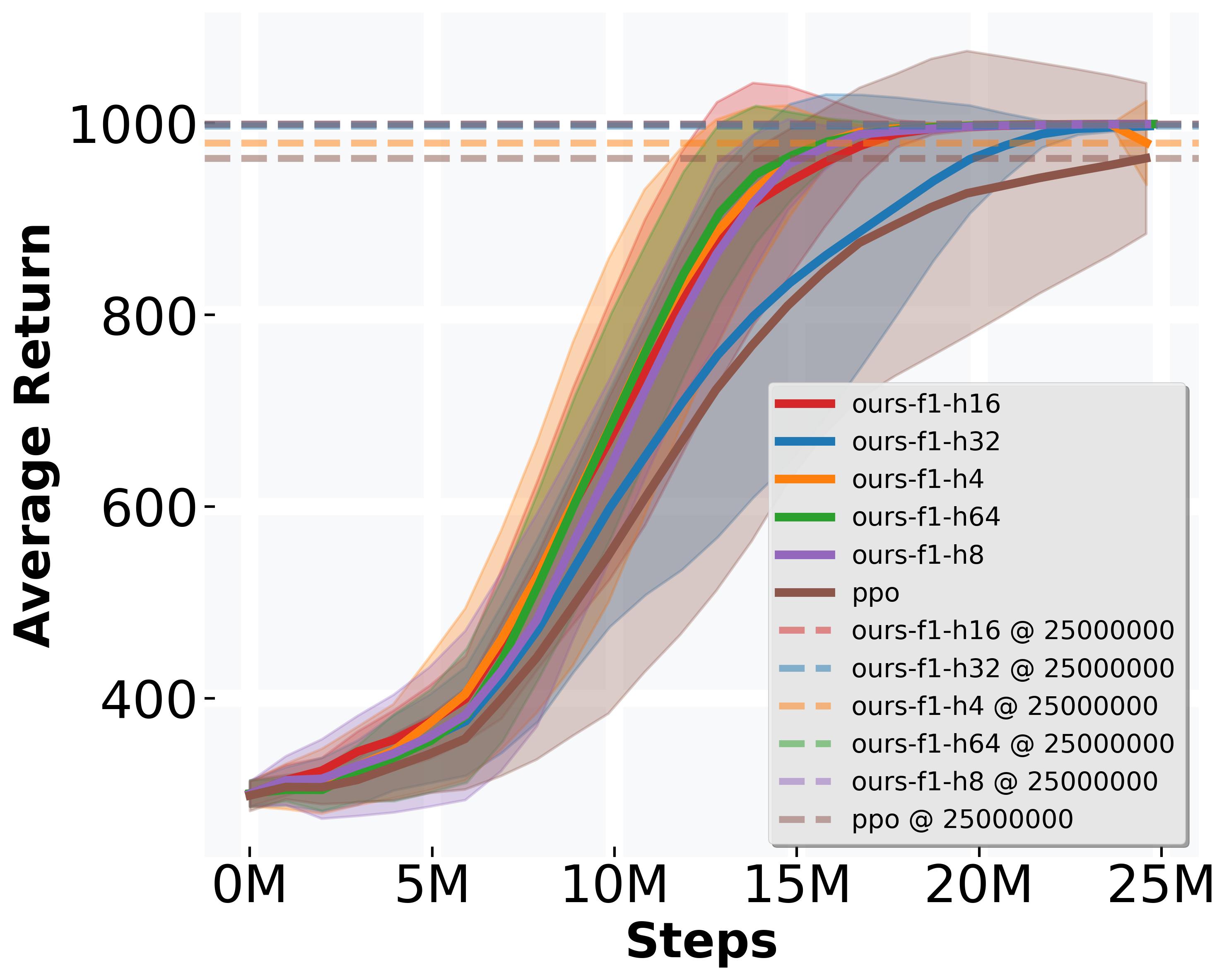}}
    \subfloat{\includegraphics[width=0.5\linewidth]{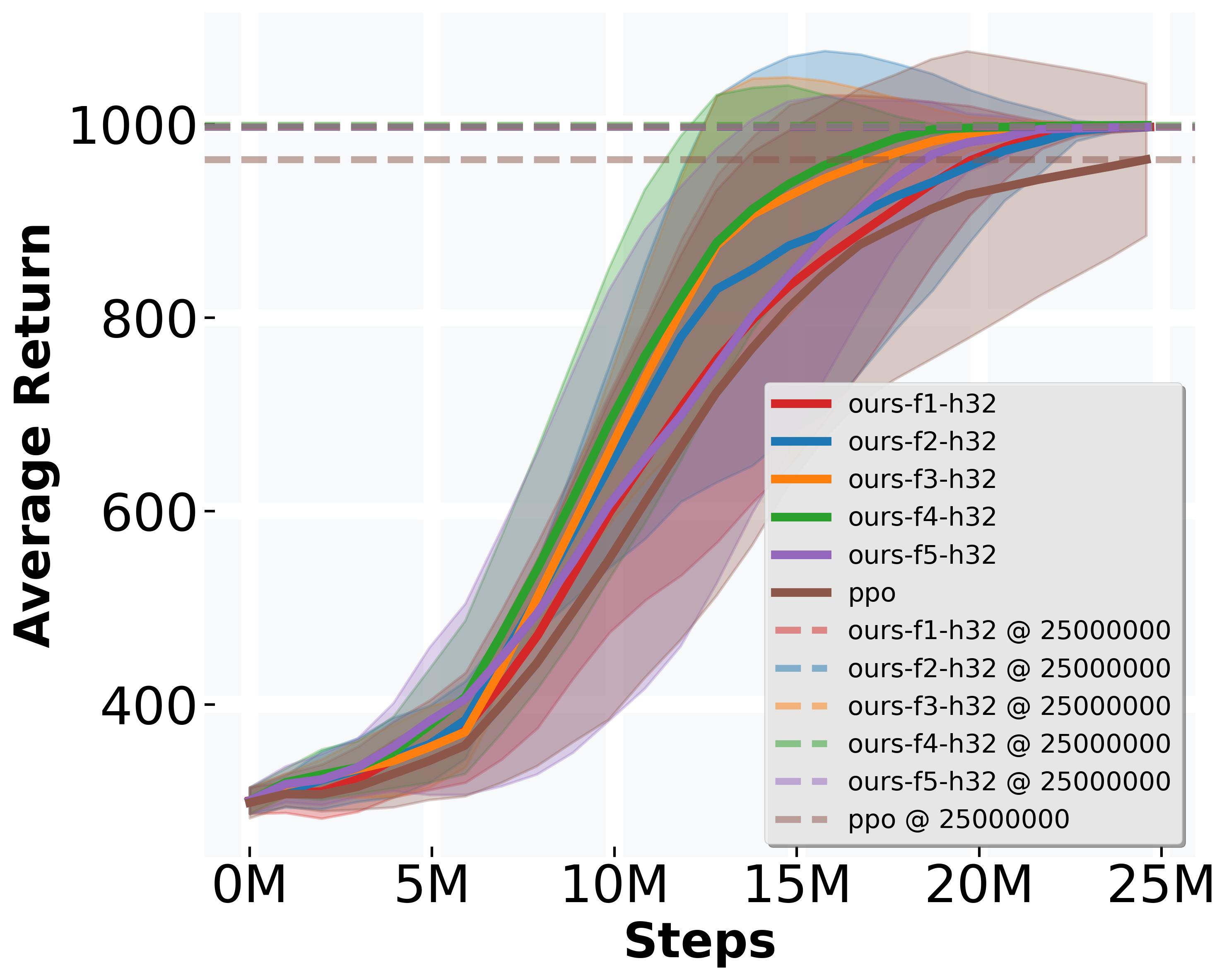}}
    \end{minipage}
    \caption{Sensitivity analysis. Left: APG horizon length $h$; right: for APG update frequency $f$. The other hyperparameter follows the benchmark test except for $N=64$.}
    \label{fig:sensitivity}
    \vspace{-10pt}
\end{figure}

\subsection{Simulation Test and Sim-to-real Physical Validation}
To validate practical viability, we trained a flat-terrain locomotion policy for the LimX TRON 6-DOF biped robot and conducted simulation and sim-to-real experiments.
Specifically, we train velocity-tracking locomotion policies for LimX TRON using both our method and the baseline with reward design shown in \tabref{tab:reward_tron}. 
And we conduct the simulation test to evaluate and compare our performance over the baseline. 
Moreover, we deploy our policy to a real bipedal robot.
\figref{fig:tron_loco_reward} shows the training curves: overall episodic return and specific task rewards defined in the first two rows of \tabref{tab:reward_tron}.
\begin{figure}[t]
    \centering
    \subfloat{\includegraphics[width=0.8\linewidth]{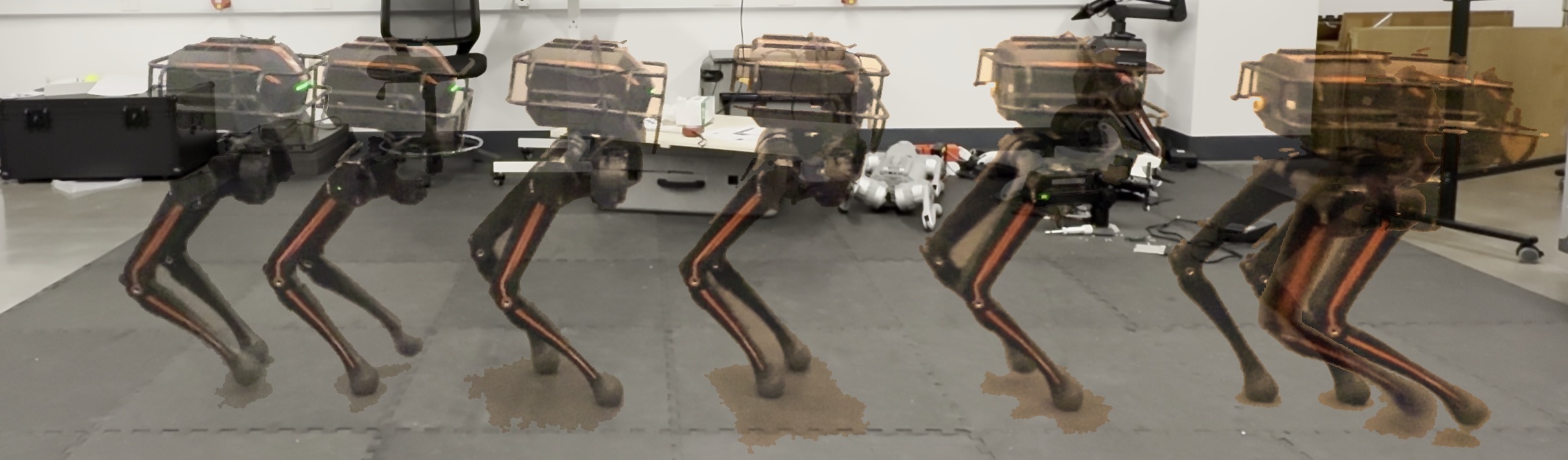}} \\
    \subfloat{\includegraphics[width=0.8\linewidth] {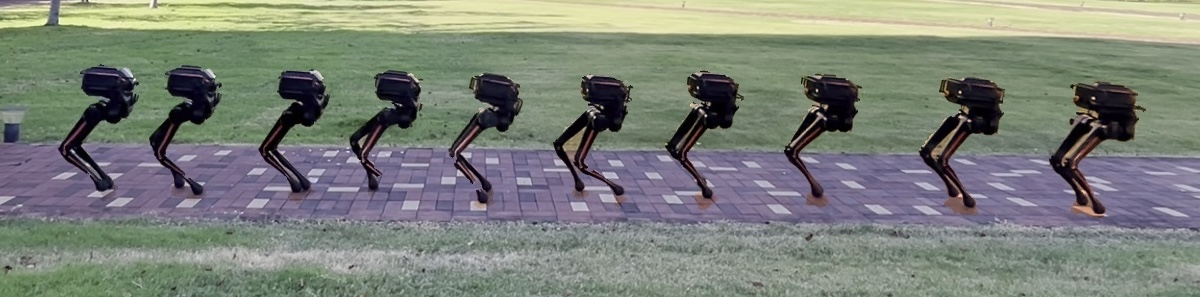}}
    \caption{Biped locomotion sim-to-real transfer experiment with TRON robot. Top: indoor experiment. Bottom: outdoor experiment.}
    \label{fig:sim_to_real}
    \vspace{-10pt}
\end{figure}
The episodic return demonstrates significantly improved sample efficiency, converging almost twice as fast as the baseline to reach similar asymptotic performance.
The velocity tracking reward terms curves on the left show the same trend. This further verifies the effectiveness of our exploration augmentation mechanism for guiding the agent to explore high-reward regions.
\begin{figure}[t]
    \centering
    \begin{minipage}{\linewidth}
    \subfloat{\includegraphics[width=0.5\linewidth]{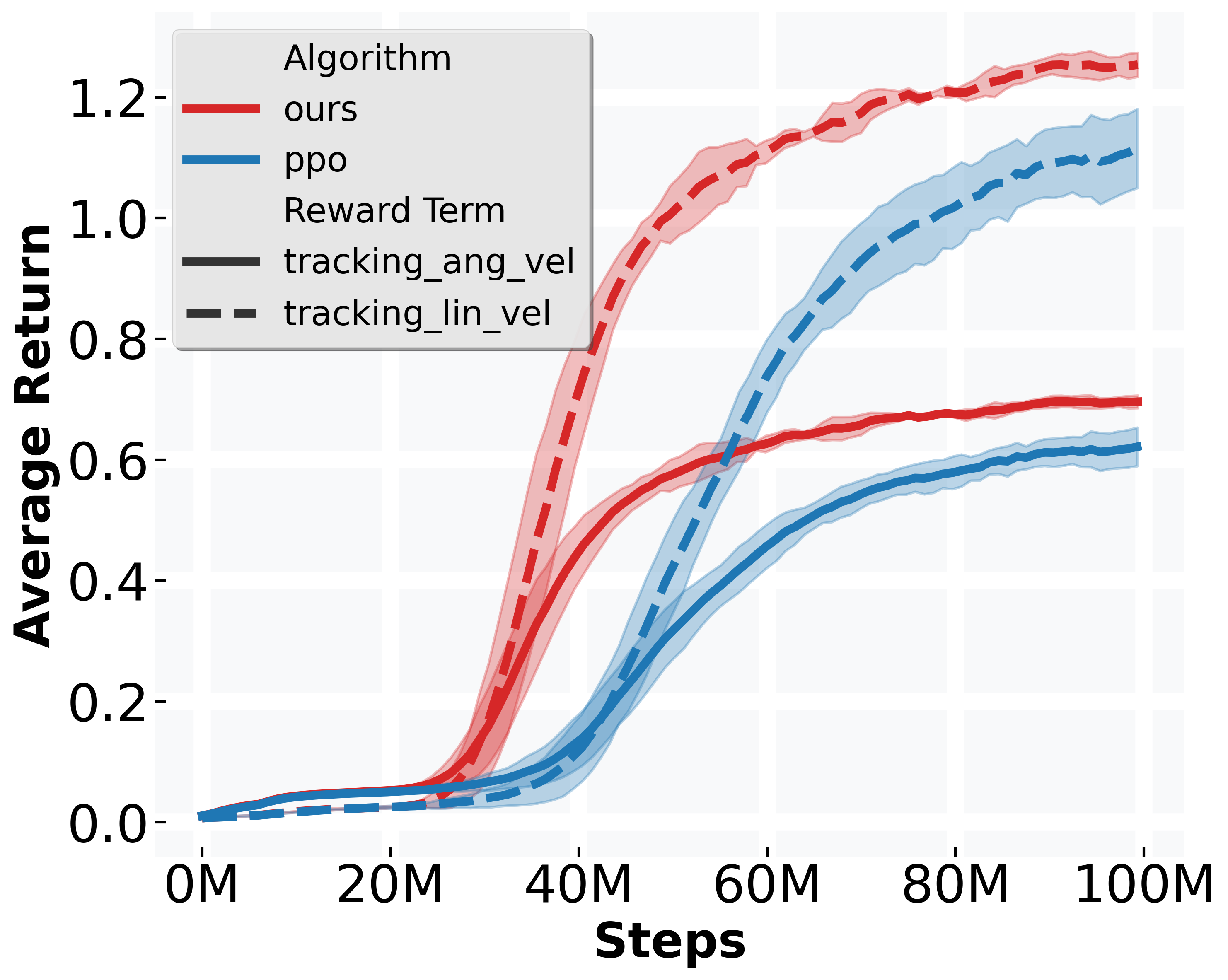}}
    \subfloat{\includegraphics[width=0.5\linewidth]{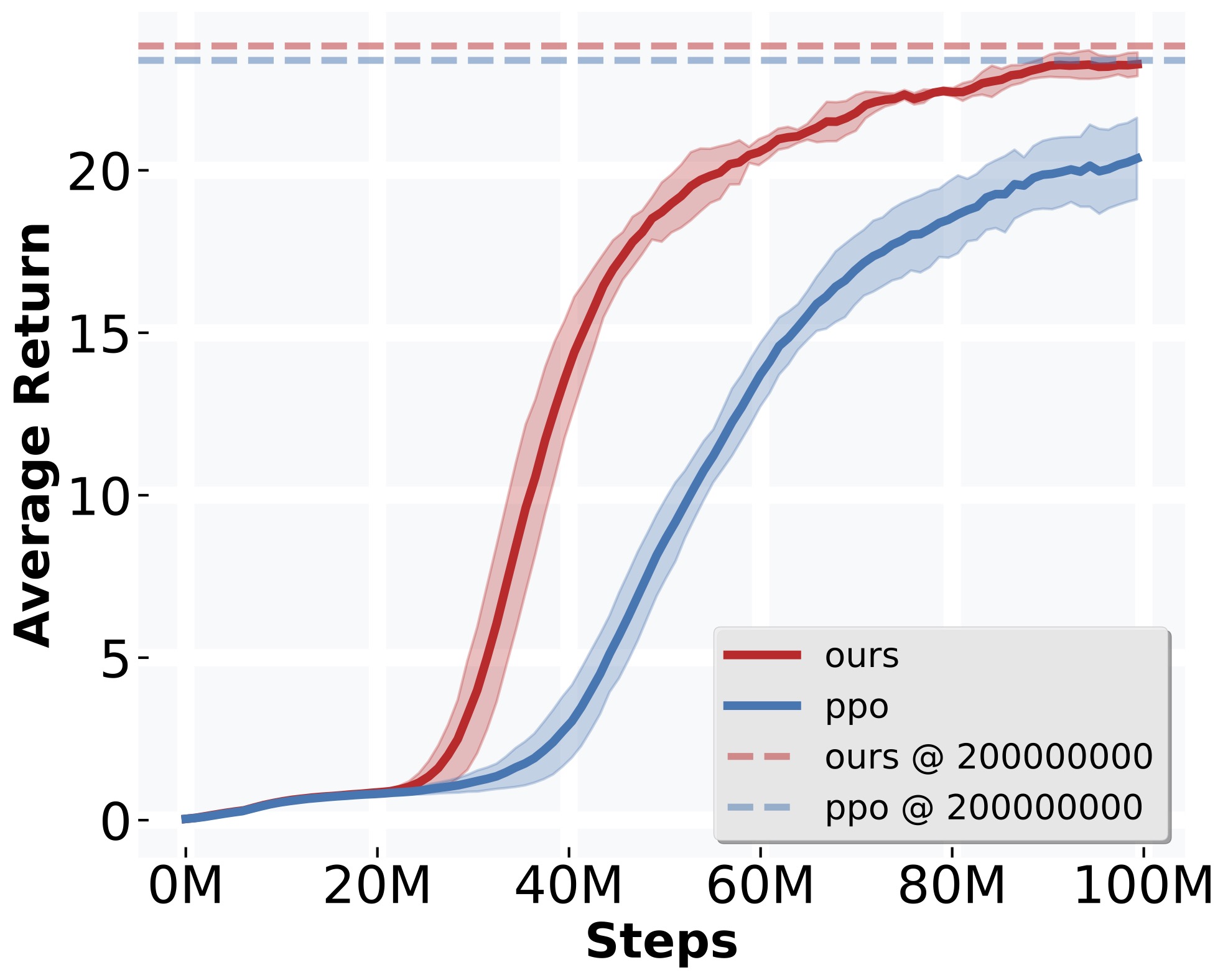}}
    \end{minipage}
    \caption{TRON locomotion policy training curve. Left: angular and linear velocity tracking rewards. Right: overall episodic return.}
    \label{fig:tron_loco_reward}
    \vspace{-10pt}
\end{figure}
\figref{fig:sim_scene} shows the simulation test scene, and \figref{fig:tron_vel_track} compares linear and angular velocity tracking performance under varying velocity commands of our method against PPO, where training reward follows \tabref{tab:reward_tron}.
Our method achieves comparable performance overall, while exhibiting lower tracking error and reduced variance in terms of the velocity tracking task, as displayed in \tabref{tab:vel_track_perform}. 
We attribute improved tracking to APG-directed exploration, which biases the agent toward motion patterns that effectively regulate velocity.
Furthermore, the differentiable task rewards, such as the velocity tracking reward in this case, dominate the analytical gradient and steer trajectories toward task-oriented behaviors that are aligned with the control objective, as indicated in \tabref{tab:vel_track_perform} and \figref{fig:tron_loco_reward}. 
\begin{table}[h]
    \scriptsize
    \centering
    \caption{Velocity tracking performance in the simulation}
    \begin{tabular}{lcc}
    \hline
                                               & Baseline (PPO) & Ours \\ 
    \hline
         MSE of Lin. Vel. Tracking $\text{(m/s)}^2$ & 0.035707  & 0.028816 ($\uparrow$ 19.30\%) \\ \hline
         Error Variances of Lin. Vel. $\text{(m/s)}^2$ & 0.034049 &0.026701 ($\uparrow$ 21.58\%)\\  \hline
         MSE of Ang. Vel. $\text{(rad/s)}^2$ & 0.023856  & 0.009260 ($\uparrow$ 61.18\%)  \\ \hline
         Error Variances of Ang. Vel. $\text{(rad/s)}^2$ &0.017882 &0.008821 ($\uparrow$ 50.67\%) \\ 
    \hline
    \end{tabular}
    \label{tab:vel_track_perform}
    \vspace{-10pt}
\end{table}
\figref{fig:sim_to_real} shows the successful sim-to-real transfer of our biped locomotion policy, demonstrating effective locomotion performance in both indoor and outdoor environments. This deployment serves as the ultimate validation of the proposed algorithm through the lens of practical viability and provides a sanity check for real-world use.
\begin{table}[t]
    \scriptsize
    \centering
    \caption{Reward terms for TRON velocity tracking training.}
    \begin{tabular}{lll}
    \hline
    \textbf{Reward Term} & \textbf{Equation} & \textbf{Weight} \\
    \hline
    Lin. vel. track. & $\exp(-4\|v_{xy}^{\text{cmd}} - v_{xy}\|_2^2)$ & 1.5 \\
    Ang. vel. track. & $\exp(-4(\omega_z^{\text{cmd}} - \omega_z)^2)$ & 0.7 \\
    \hline
    Lin. vel. (z) & $v_z^2$ & $-0.5$ \\
    Ang. vel. (xy) & $\|\omega_{xy}\|_2^2$ & $-0.05$ \\
    Orientation & $\|g_{xy}\|_2^2$ & $-10.0$ \\
    Base height & $(h - h_{\text{target}})^2$ & $-2.0$ \\
    \hline
    Joint acceleration & $\|\ddot{q}\|_2^2$ & $-2.5 \times 10^{-7}$ \\
    Torques & $\|\tau\|_1$ & $-8.0 \times 10^{-5}$ \\
    Action rate & $\|a_t - a_{t-1}\|_2^2$ & $-0.01$ \\
    \hline
    Feet distance & $\text{clip}(d_{\text{min}} - d_{\text{feet}}, 0, 1)$ & $-50.0$ \\
    Feet landing vel. & $\sum \text{landing} \cdot v_{z,i}^2$ & $-0.15$ \\
    Feet air time & $\sum \text{clip}(t_{\text{air}} - t_{\text{min}}, 0, t_{\text{max}})$ & $2.0$ \\
    Feet slip & $\sum \|v_{\text{body}}\| \cdot \text{contact}_i$ & $-0.25$ \\
    Feet phase & $\displaystyle \exp(-\frac{\|h_{\text{feet}} - r_z(\phi)\|_2^2}{0.01})$ & $1.0$ \\
    \hline
    Joint dev. knee & $\sum |q_{\text{knee}} - q_{\text{knee}}^{\text{default}}|$ & $-0.05$ \\
    Joint dev. hip & $\sum |q_{\text{hip}} - q_{\text{hip}}^{\text{default}}| \cdot |\text{cmd}_y|$ & $-0.15$ \\
    DOF pos limits & $\sum \text{violations}(q, q_{\text{lim}})$ & $-2.0$ \\
    Pose & $\sum (q - q_{\text{default}})^2 \cdot w$ & $-1.0$ \\
    \hline
    Termination & $\text{done}$ & $-1.0$ \\
    \hline
    \end{tabular}
    \label{tab:reward_tron}
\end{table} 
\begin{figure}[htb!]
    \centering
    \subfloat{\includegraphics[height=0.25\linewidth]{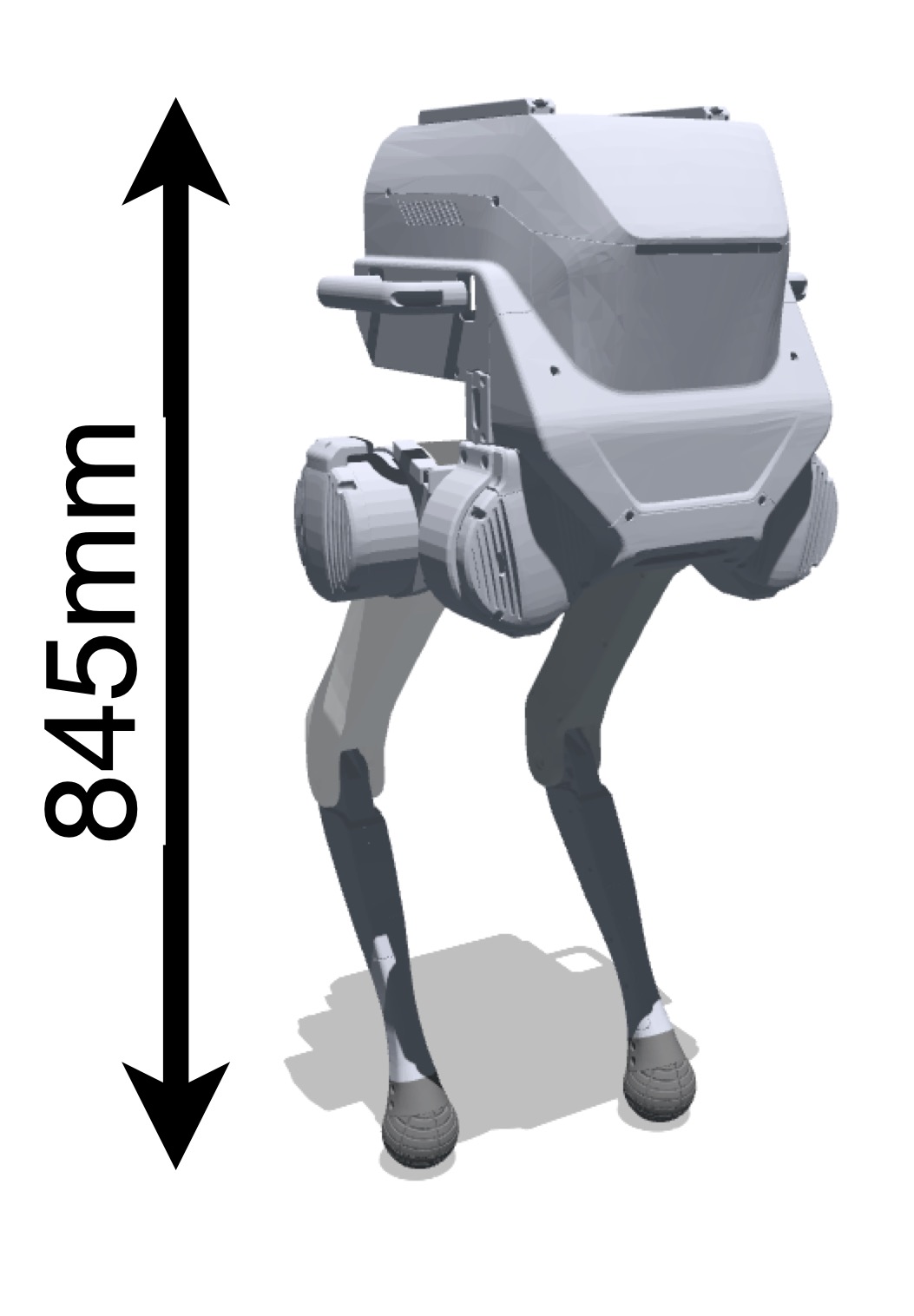}} 
    \subfloat{\includegraphics[height=0.26\linewidth]{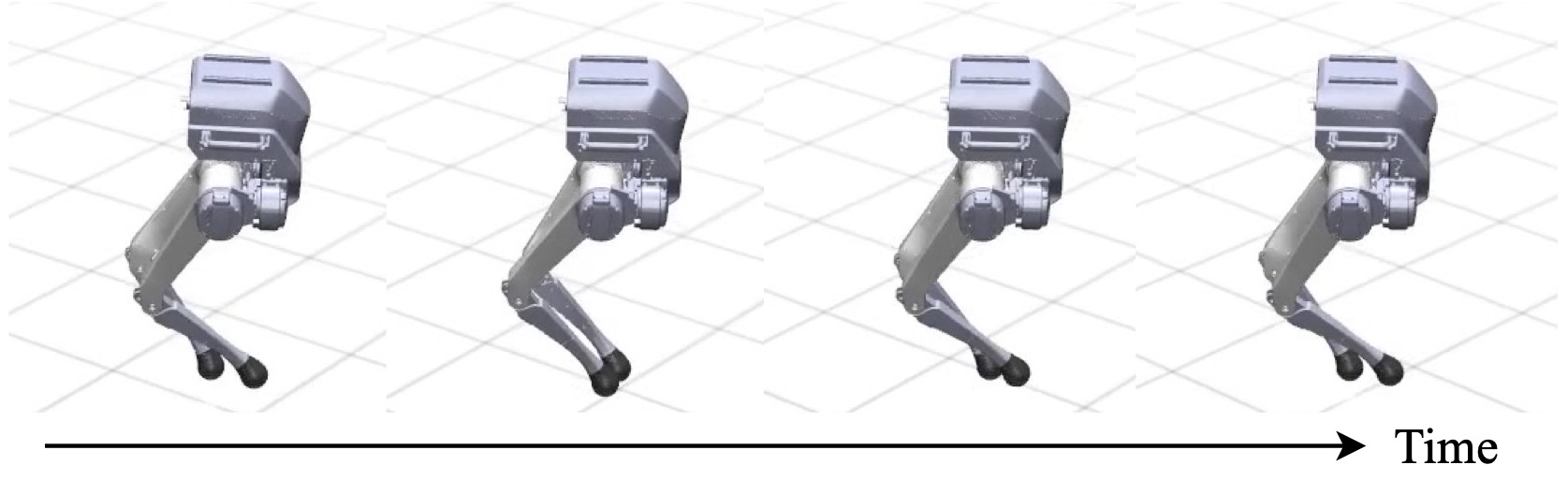}} 
    \caption{Left: 6-DOFs biped robot, LimX TRON. Right: TRON biped locomotion policy test scene with the proposed method.}
    \label{fig:sim_scene}
    \vspace{-10pt}
\end{figure} 
\begin{figure}[t]
    \centering
    \includegraphics[width=\linewidth]{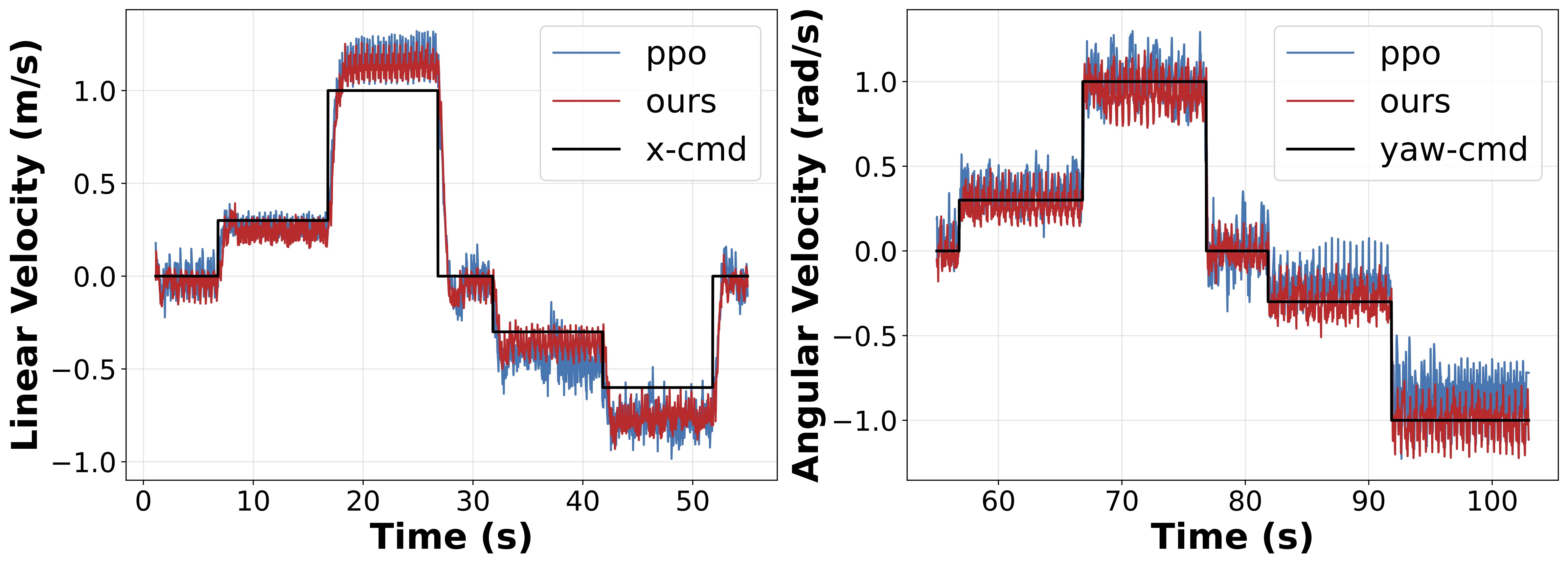}
    \caption{Biped velocity tracking performance in simulation test.}
    \label{fig:tron_vel_track}
    \vspace{-10pt}
\end{figure}

\section{Discussion}
As mentioned in \mbox{\secref{sec:introduction}}, model-free RL accommodates arbitrary rewards but is data-expensive, while FoG methods are sample-efficient but limited to differentiable tasks. Our approach bridges this gap by utilizing analytical gradients solely for exploration guidance. This decoupling potentially enables a hybrid reward structure, where the exploratory policy optimizes dense, differentiable task objectives, such as velocity tracking, while the primary policy optimizes the full reward function, including non-differentiable constraints like gait regularization. Crucially, the model-free update acts as a filter, rejecting exploratory trajectories that violate these constraints via advantage estimation. This design allows our method to exploit dynamics priors for efficiency while preserving compatibility with arbitrary, non-differentiable reward formulations.
Furthermore, while demonstrated here with PPO, a widely adopted RL algorithm for robotic control, our framework is fundamentally a modular data augmentation engine. Since the generated exploratory trajectories are decoupled from the primary policy update logic, this paradigm has the potential to be extended to any actor-critic host RL algorithm, such as A2C or TRPO. 
However, in sparse reward settings where short-horizon gradients vanish, our method leverages the terminal value signal from a well-learned critic. 
Admittedly, during early training with extremely sparse rewards and an uninformative critic, our method may degenerate to PPO. 
Reported in \mbox{\cite{suh2022differentiable, schwarke2024learning}}, analytical gradients in contact-rich tasks often exhibit empirical bias. A dynamics-based curriculum can mitigate this by starting with smoothed dynamics to provide reliable early guidance, before progressively increasing simulation realism.
Finally, while offering better wall-clock efficiency in simpler dynamics, our method may currently underperform PPO in complex contact-rich tasks due to the differentiable simulator's implementation overhead.

\section{Conclusion and Future Work} \label{sec:conclusion}
We proposed an exploration augmentation framework utilizing analytical policy gradients to guide on-policy RL. Theoretical analysis confirms that this strategy yields a positive-biased advantage, driving significantly improved sample efficiency.
Extensive benchmarks and sim-to-real deployment on a bipedal robot demonstrate consistent gains in training stability and physical viability, validating the method's effectiveness across diverse tasks.
Our approach uniquely combines gradient-guided exploration with flexible, non-differentiable rewards, though it remains sensitive to sparse reward settings and simulator fidelity.
Future directions include adaptively tuning the data merging ratio based on gradient reliability and integrating learned dynamics models to extend applicability to complex, black-box environments.

\bibliographystyle{IEEEtran}
\bibliography{IEEEabrv, ref}

\end{document}